%
%
%
%

\RequirePackage{fix-cm}

\documentclass[twocolumn]{svjour3}          
\journalname{Journal of Mathematical Imaging and Vision}

\smartqed  
\usepackage{amsfonts}
\usepackage{color}
\usepackage{graphicx}
\usepackage{amssymb}
\usepackage{amsmath}
\usepackage{array}
\usepackage{tabularx}
\usepackage{multirow}
\usepackage{longtable}
\usepackage[hyphens]{url}
\usepackage{mathtools}
\usepackage[utf8]{inputenc}
\usepackage[english]{babel}
\mathtoolsset{showonlyrefs}
\usepackage{enumitem}
\usepackage[sort, numbers]{natbib}

\usepackage[colorinlistoftodos]{todonotes}

\usepackage[bookmarks,colorlinks,breaklinks]{hyperref}  
\usepackage[hyphenbreaks]{breakurl}

\definecolor{dullmagenta}{rgb}{0.4,0,0.4}   
\definecolor{darkblue}{rgb}{0,0,0.4}
\definecolor{darkgreen}{rgb}{0,0.6,0}
\definecolor{darkred}{rgb}{0.6,0,0}

\hypersetup{linkcolor=darkblue,citecolor=blue,filecolor=dullmagenta,urlcolor=darkblue,breaklinks=true}


\DeclareMathOperator{\Circ}{Circ}
\DeclareMathOperator{\diag}{diag}
\DeclareMathOperator{\avg}{avg}

\DeclareMathOperator{\PS}{PS}
\DeclareMathOperator{\BS}{BS}
\DeclareMathOperator{\RPS}{RPS}
\DeclareMathOperator{\RBS}{RBS}

\newcommand{\bN}{{\mathbb N}} 
\newcommand{\bZ}{{\mathbb Z}} 
\newcommand{\bR}{{\mathbb R}} 
\newcommand{\bC}{{\mathbb C}} 
\newcommand{\bG}{{\mathbb G}}


\newcommand{\idty}{{\rm I}} 

\newcommand{\cA}{{\cal A}}
\newcommand{\cB}{{\cal B}} 

\newcommand{\cF}{{\cal F}} 
\newcommand{\cG}{{\cal G}}

\newcommand{\cL}{{\cal L}}


\newcommand{\cR}{{\cal R}}
\newcommand{\cS}{{\cal S}} 


\newcommand{\cX}{{\cal X}}

\newcommand{\bem}{\left(\!\begin{array}}
\newcommand{\eem}{\end{array}\!\right)}
\newcommand{\bsm}{\left(\begin{smallmatrix}} 
\newcommand{\esm}{\end{smallmatrix}\right)}  

\newcommand{\supp}{{\rm supp}}



 
\title{Fourier descriptors based on the structure of the human primary visual cortex with applications to object recognition}
\author{Amine Bohi \and Dario Prandi \and Vincente Guis \and Fr\'ed\'eric Bouchara \and Jean-Paul Gauthier }

\begin{document}

\institute{A. Bohi \and V. Guis \and F. Bouchara \and J.P. Gauthier \at
              LSIS Laboratory, University of South Toulon - Var B.P. 20132 83957 La Garde Cedex \\
              Tel.: +33-4-94142210\\
               \email{amine.bohi@lsis.org}; $\{$bouchara,gauthier,guis$\}$@univ-tln.fr         \\
           \and
           D. Prandi \at
            CEREMADE Laboratory, University of Paris-Dauphine, Place du Maréchal De Lattre De Tassigny 75775 PARIS CEDEX 16 \\
            \email{prandi@ceremade.dauphine.fr}
}

\date{Received: date / Accepted: date}

\maketitle

\begin{abstract}
In this paper we propose a supervised object recognition method using new global features and inspired by the model of the human primary visual cortex V1 as the semidiscrete roto-translation group $SE(2,N) = \bZ_N\rtimes \bR^2$. The proposed technique is based on generalized Fourier descriptors on the latter group, which are invariant to natural geometric transformations (rotations, translations). 
These descriptors are then used to feed an SVM classifier. 
We have tested our method against the COIL-100 image database and the ORL face database, and compared it with other techniques based on traditional descriptors, global and local. The obtained results have shown that our approach looks extremely efficient and stable to noise, in presence of which it outperforms the other techniques analyzed in the paper.

\keywords{Descriptor \and Fourier transform \and hexagonal grid \and geometric transformations \and support vector machine \and object recognition}
\end{abstract}

\section{Introduction}
Object recognition is a fundamental problem in computer vision and keeps attracting more and more attention nowadays. Its concepts have been applied in multiple fields, as manufacturing, surveillance system, optical character recognition, face recognition, etc.

Almost every object recognition algorithm proposed in the literature is based on the computation of certain features of the image, which allow to characterize the object depicted and to discriminate it from others. In particular, since objects can appear at different locations and with different sizes, it is desirable for such features to be invariant by translation, rotation and scale. These invariant features can be global, i.e.\ computed taking into account the whole image, or local, i.e.\ computed considering only neighborhoods of key-points in the image.

In this paper we focus on \emph{Fourier descriptors}, an important class of global invariant features used since the seventies \citep{Granlund1972, Zahn1972Fourier} based on algebraic properties of the Fourier transform. In particular, inspired by some neurophysiological facts on the structure of the human primary visual cortex, we extend this theory to define features invariant to translation and rotation and we apply them for invariant object recognition in SVM context. These results are then compared with those obtained with another important class of global invariant features, the \emph{moment invariants} (see Appendix~\ref{sec:moment}), used e.g.\ in \citep{399871, raja2011artificial, rajasekaran2000image}, and with two different local invariants. For more information on object recognition via local features we refer to \cite{lowe2004distinctive,morel2009asift,ke2004pca,mikolajczyk2005performance,bay2006surf,dalal2005histograms}. 

Our choice of a global approach is motivated by the better results obtained by these methods in presence of noise, luminance changes and other different alterations, with respect to algorithms based on local invariant features \citep{Choksuriwong2008}. Indeed, under these conditions, key-points detectors used in the local approach produce key-points that are not relevant for the object recognition. 

In the following we briefly introduce the theory of Fourier descriptors, before discussing the framework used in this paper and our contributions.



%
%

\subsection{Fourier descriptors}

The basic idea behind Fourier descriptors is that the action of an abelian locally compact group $\bG$ on functions in $L^2(\bG)$ is much easier to treat at the level of their Fourier transforms.
In the specific case of images, $f,g\in L^2(\bR^2)$, this is expressed by the well-known equivalence for the translation of $a\in\bR^2$:
\begin{multline}
	\label{eq:translations}
	f(x) = g(x-a) \quad \forall x\in\bR^2
	\\
	\iff
	\hat f(\lambda) = e^{i\langle a, \lambda \rangle} \hat g(\lambda) \quad\forall \lambda\in \bR^2,
\end{multline}
where the Fourier transform is defined\footnote{Here we use a non-unitary definition of the Fourier transform for future convenience in computations.} by
\begin{equation}
	\hat f(\lambda) = \int_{\bR^2} f(x)\,e^{-i\langle\lambda, x\rangle}\,{dx}, \qquad\forall \lambda\in\bR^2.
\end{equation}

In this setting, Fourier descriptors are quantities associated with functions of $L^2(\bR^2)$ that can be easily computed starting from their Fourier representation and that are invariant under the action of translations.
Ideally, a Fourier descriptor should be \textbf{complete}, meaning that for any couple of images $f,g\in L^2(\bR^2)$ the equality of the Fourier descriptor is equivalent to the equality of $f$ and $g$ up to translations.
Indeed, the lack of completeness could yield to problems in applications, notably to false positives in the classification.

However, a result as strong as completeness is usually out of reach and unnecessary for practical applications. 
In this case, one looks for Fourier descriptors that are at least  \textbf{weakly complete}, meaning that they are complete on a sufficiently big subset of $L^2(\bR^2)$, usually either open and dense or at least residual, i.e.\ the intersection of countably many open and dense sets. 
This guarantees that the Fourier descriptor will correctly classify a sufficiently large class of images. 

Various Fourier descriptors have been defined in the literature \cite{Granlund1972, Zahn1972Fourier, Kuhl1982, Kakarala2012Bispectrum, Smach2008Generalized}.
In this work we are mainly interested in the following two, whose invariance w.r.t.\ translations can be checked via \eqref{eq:translations}.
\begin{itemize}[leftmargin=*]
	\item \textbf{Power-spectrum:} the quantity $\PS_f(\lambda) := |\hat f(\lambda)|^2$ for $\lambda\in\bR^2$, which is the Fourier transform of the auto-correlation function 
	\begin{equation}
		a_f(x) := \int_{\bR^2} \overline{f(y)} \, f(y+x)\, dy.
	\end{equation}
	It is easy to show that the power-spectrum is not weakly complete, and indeed it is used in texture synthesis to identify the translation invariant Gaussian distribution of textures \citep{Galerne2011Random}.

	\item \textbf{Bispectrum:} an extension of the power-spectrum, it is the quantity $\BS_f(\lambda_1,\lambda_2):=\hat f(\lambda_1)\, \hat f(\lambda_2)\, \overline{\hat f(\lambda_1+\lambda_2)}$, or equivalently the Fourier transform of the triple correlation,  defined as
	\begin{equation}
		a_{3,f}(x_1,x_2) := \int_{\bR^2} \overline{f(y)} \, f(y+x_1)\, f(y+x_2) \,dy.
	\end{equation}
	These descriptors are complete on compactly supported functions of $L^2(\bR^2)$ and are well established in statistical signal processing.
	See e.g.\ \citep{Dubnov1997}, where they are applied to sound texture recognition.
\end{itemize}

These two Fourier descriptors can be easily generalized to functions on $L^2(\bG)$ of a locally compact abelian group $\bG$ to obtain invariants under the action of $\bG$.
This can be applied, for example, to 2D shape recognition.
However, when working with images, these descriptors are unsatisfying. Indeed, they are invariant only under translations, and so cannot be used to classify images under the action of rotations.

\subsection{Framework of the paper}

In this paper, following a line of research started in \citep{Smach2008Generalized}, we present a theoretical framework that allows us to build generalized Fourier descriptors which are invariant w.r.t.\ (semidiscrete) roto-translations of images.
We exploit the following two facts:
\begin{itemize}
	\item It is possible to define a natural generalization of the power spectrum and the bispectrum on non-commutative groups, as it has been done in \citep{Smach2008Generalized,Kakarala2012Bispectrum}. 
	\item Contributions of some of the authors to a fairly recent model of the human primary visual cortex V1 \citep{Boscain2014Hypoelliptic,Boscain2014Image} have shown that the latter can be modeled as the semidiscrete group of roto-translations $SE(2,N) = \bZ_N\rtimes \bR^2$.
	In this model, cortical stimuli are functions in $L^2(SE(2,N))$, w.r.t.\ the Haar measure of $SE(2,N)$, and images from the visual plane are lifted to cortical stimuli via a natural injective and left-invariant lift operation $\cL:L^2(\bR^2)\to L^2(SE(2,N))$.
	Such lift is defined as the wavelet transform w.r.t.\ a mother wavelet $\Psi$, see Section~\ref{sec:a_mathematical_model_of_the_primary_visual_cortex_v1}.
\end{itemize}

From these facts, a natural pipeline for invariant object recognition is the following:
\begin{enumerate}
	\item Given an image $f\in L^2(\bR^2)$ lift it to a cortical stimulus $\cL f\in L^2(SE(2,N))$.
	\item Compute the generalized Fourier descriptors of $\cL f$ on the non-commutative group $SE(2,N)$.
	\item If the lift of another image $g\in L^2(\bR^2)$ have the same Fourier descriptors as $\cL f$, 
		deduce that $\cL f \approx \cL g$ up to the action of $SE(2,N)$.
	\item Thanks to the left-invariance and injectivity of the lift $\cL$, obtain that also $f\approx g$ up to the action of $SE(2,N)$.
\end{enumerate}

This pipeline was already investigated in \cite{Smach2008Generalized}, where the authors considered a non-left-invariant lift, the \emph{cyclic lift}. For this lift they then proved a weak completeness result of the generalized bispectrum for images, represented as $L^2(\mathbb R^2)$ functions with support inside a fixed compact set. 

In this paper we consider the same question for left-invariant lifts, where the situation turns out to be more complicated. In particular, as explained in the following section, to ensure the weak completeness we are led to consider ``stronger'' invariants than the generalized bispectrum. However, as observed in Remark~\ref{rmk:cyclic-comp}, the actual computation of these stronger invariants on lifted images requires $N$ times less computational time and space w.r.t.\ the computation of the generalized bispectrum of cyclically lifted images.

\subsection{Contributions of the paper}

Let $K\subset \bR^2$ be any compact set, representing the size of the images under consideration. 
According to the pipeline for object recognition introduced above, the weak completeness of the generalized Fourier descriptors on images can be proved in two steps:
\begin{enumerate}
	\item Prove the completeness of the generalized Fourier descriptors on some residual set $\mathcal G\subset L^2(\bZ_N\times K)$ of cortical stimuli;
	\item Prove that for some residual set $\mathcal R\subset L^2(\mathbb R^2)$ of images with support in $K$ we have $\mathcal L(\mathcal R)\subset \mathcal G$.
\end{enumerate}

The first point is addressed in Theorem~\ref{thm:complete-bisp}, where is identified an open and dense set $\cG\subset L^2(\bZ_N\times K)$ on which the combination of the generalized power-spectrum and bispectrum holds. This generalizes the result in \cite{Smach2008Generalized}, where the same result was proved for a residual subset of the range of the cyclic lift.

Unfortunately, it turns out that for this set $\cG$ and a left-invariant lift $\cL$ there is no hope of finding a set $\cR\subset L^2(\mathbb R^2)$ satisfying the second point above. We are then led to introduce stronger Fourier descriptors, the rotational power-spectrum and bispectrum, which are invariant only w.r.t.\ rotations. To solve this problem we preprocess images by centering them at their barycenter, a procedure that is essential also in \citep{Smach2008Generalized}. Theorem~\ref{thm:rot-bisp-compl} then shows that the resulting invariants are complete for an open and dense set of functions in $L^2(K)$, for any compact $K\subset \bR^2$. The proof of this completeness requires fine technical tools from harmonic analysis and the theory of circulant operators, and for this reason we  only present a sketch of it, evidencing the technical difficulties. A complete proof will be presented in a forthcoming paper by the second and last authors.

Finally, in Theorem~\ref{thm:explicit-comp} we show that, under mild assumptions on the mother wavelet $\Psi$, to check the equality of all these Fourier descriptors it is sufficient to compute simple quantities computed  from the 2D Fourier transform of the image. This allow for an efficient implementation on regular hexagonal grids. After using these descriptors to feed a SVM based classifier, we compare  their performances with those of Hu and Zernike moments, the Fourier-Mellin transform and some well-known local descriptors. To this purpose, we test them on two large databases: the COIL-100\footnote{\url{http://www.cs.columbia.edu/CAVE/software/softlib/coil-100.php}} object recognition database, composed of 7200 objects presenting rotation and scale changes \cite{nene1996columbia}, and the ORL\footnote{\url{http://www.cl.cam.ac.uk/research/dtg/attarchive/facedatabase.html}} face database, on which different human faces are subjects to several kind of variations.

\subsection{Structure of the paper}

The remainder of the paper is organized as follows. In Section~\ref{sec:a_mathematical_model_of_the_primary_visual_cortex_v1}, we present the features of a mathematical model of the primary visual cortex $V1$ that are essential to our approach. In Section~\ref{sec:preliminaries_on_non_commutative_harmonic_analysis}, we introduce some generalities on the Fourier Transform on the semidiscrete group of roto-translation $SE(2,N)$. In Section~\ref{sec:fourier_descriptor_on_se}, we describe the natural generalization of the power-spectrum and the bispectrum on $\bR^2$ to $SE(2,N)$. We then prove the weak completeness result (Theorem~\ref{thm:complete-bisp}) and show that under the chosen lift operator this does not imply weak completeness for images. Finally, we introduce the rotational power-spectrum and bispectrum and sketch the proof of the corresponding weak completeness result (Theorem~\ref{thm:rot-bisp-compl}) for images. We end this section with some result on the practical computation of these descriptors. In Section~\ref{sec:experimental_results} we illustrate some numerical results where these descriptors are compared with those obtained via global descriptors such as Zernike moments, Hu moments, Fourier-Mellin transform, and local ones like the SIFT and HoG descriptors. Finally, we conclude with some practical suggestions in Section \ref{sec:conclusion}.



\section{A mathematical model of the primary visual cortex V1} 
\label{sec:a_mathematical_model_of_the_primary_visual_cortex_v1}

As mentioned in the introduction, the main novelty of our approach is its connection with a fairly recent model of the human primary visual cortex V1 due to Petitot and Citti-Sarti \citep{Petitot2008Neurogeometrie,Citti2006Cortical} and our recent contributions \citep{Boscain2012Anthropomorphic,Boscain2014Hypoelliptic,Boscain2014Image,Prandi2015}.
The theory of orientation scores introduced in \cite{Duits2010LeftInvarianta,Duits2010LeftInvariant} is also strongly connected with this work, in particular for its exploitation of left-invariant lift operators.
We also mention \cite{Mallat}, where image invariants based on the structure of the roto-translation group $SE(2)$ are introduced for textures.
In this section we present the features of this model that are essential to our approach.

Since it is well-known \citep{Hubel1959Receptive} that neurons in V1 are sensitive not only to positions in the visual field but also to local orientations and that it is reasonable to assume these orientations to be finite, in \citep{Boscain2014Hypoelliptic} V1 has been modeled as the semidiscrete group of roto-translations $SE(2,N) = \bZ_N\rtimes \bR^2$ for some even $N\in\bN$.
Letting $R_k$ be the rotation of $2\pi/k$, the (non-commutative) group operation of $SE(2,N)$ is
\begin{equation}
	(x,k)(y,r) = (x+R_k y, {k+r}).
\end{equation}
Here, we are implicitly identifying $k+r$ with $k+r\mod N$.

Visual stimuli $f\in L^2(\bR^2)$ are assumed to be lifted to activation patterns in $L^2(SE(2,N))$ by a lift operator $\cL:L^2(\bR^2)\to L^2(SE(2,N))$.
Motivated by neurophysiological evidence, we then assume that
\begin{enumerate}
	\item[(H)] The lift operator $\cL$ is linear and is defined as 
	\begin{equation}
		\label{eq:twist-shift}
		\cL f(x,k) := \int_{\bR^2} f(y) \bar\Psi(R_{-k}(y-x))\, dy,
	\end{equation}
	for a given mother wavelet $\Psi\in L^2(\bR^2)$ such that $\cL$ is injective and bounded.
\end{enumerate}

\begin{remark}
This assumption means that the lift operator under consideration is the wavelet transform w.r.t.\ $\Psi$ (See, e.g., \cite{Fuhr2002a}).
The fact that $\cL$ be injective and bounded is then equivalent to the fact that the mother wavelet $\Psi$ is \emph{weakly admissible}, i.e., is such that the map
\begin{equation}
	\lambda\in\mathbb R^2\mapsto \sum_{k\in \bZ_N} |\hat\Psi(R_{-k}\lambda)|^2
\end{equation}
is strictly positive and essentially bounded. 
\end{remark}

As a consequence of the above assumption, the lift operation $\cL$ is left-invariant w.r.t. to the action of \\$SE(2,N)$. 
Namely,
\begin{equation}
	\label{eq:left-inv}
	\Lambda(x,k)\circ \cL = \cL \circ\pi(x,k).
\end{equation}
Here $\Lambda$ and $\pi$ are the actions of $SE(2,N)$ on \\$L^2(SE(2,N))$ and $L^2(\bR^2)$ respectively.
That is,
\begin{gather}
	\begin{split}
	[\Lambda(x,k)&\varphi ](y,r) \\
	&= \varphi\left((x,k)^{-1}(y,r)\right) = \varphi(R_{-k}( y-x), {k+r}),
	\end{split}
	\\
	[\pi(x,k)f](y) = f\left((x,k)^{-1}y\right) = f(R_{-k}( y-x)) .
\end{gather}
Formula \eqref{eq:left-inv} can be seen as a semidiscrete version of the shift-twist symmetry \citep{Bressloff2001Geometric}. 

The main observation for our purposes is that \eqref{eq:left-inv} means that two images $f$ and $g\in L^2(\bR^2)$ can be deduced via roto-translation (i.e., $f = \pi(x,k)g$ for some $(x,k)\in SE(2,N)$) if and only if their lifts can be deduced via $\Lambda(x,k)$.


\section{Preliminaries on non-commutative harmonic analysis} 
\label{sec:preliminaries_on_non_commutative_harmonic_analysis}

In this section we introduce some generalities on the (non-commutative) Fourier transform on $SE(2,N)$, an essential tool to define and compute the Fourier descriptors we are interested in. We refer to \cite{Hewitt1963Abstract, Barut77theoryof} for a general introduction to the topic.

Since $SE(2,N)$ is a non-commutative unimodular group, the Fourier transform of $\varphi\in L^2(SE(2,N))$ is an operator associating to each (continuous) irreducible unitary representations $T^\lambda$ of $SE(2,N)$ some Hilbert-Schmidt operator on the Hilbert space where $T^\lambda$ acts.
Here, $\lambda$ is an index taking values in the dual object of $SE(2,N)$, which is denoted by $\widehat{SE(2,N)}$ and is the set of equivalence classes of irreducible unitary representations. 

The set of irreducible representations of a semi-direct product group can be obtained via Mackey's machinery (see, e.g., \cite[Ch. 17.1, Theorems 4 and 5]{Barut77theoryof}). Accordingly, $\widehat{SE(2,N)}$ is parametrized by the orbits of the (contragredient) action of rotations $\{R_k\}_{k\in\bZ_N}$ on $\mathbb R^2$, i.e., by the slice $\cS\subset\bR^2$ which in polar coordinates is $(0,+\infty)\times [0,2\pi/N)$. Additionally, corresponding to the origin, there are the characters of $\bZ_N$.
Namely, to each $\lambda\in\cS$ corresponds the representation $T^\lambda$ acting on $\bC^N$ via
\begin{multline}
	\label{eq:repre}
	T^\lambda(x,k)v 
	= \text{diag}_h(e^{i\langle \lambda, R_h x \rangle})\circ S^k v \\
	= \left( e^{i\langle \lambda, R_h x\rangle}v_{h+k} \right)_{h=0}^{N-1},
\end{multline}
where we denoted by $\diag_h v_h$ the diagonal matrix of diagonal $v\in\bC^N$ and by $S$ the shift operator $(Sv)_j = v_{j+1}$, so that $(S^kv)_j = v_{j+k}$.
On the other hand, to each $k\in\bZ_N$ corresponds the representation on $\bC$ given by $z\mapsto e^{i \frac{2\pi k}N} z$.
Since it is possible to show that to invert the Fourier transform it is enough to consider only the representations parametrized by $\cS$, we will henceforth ignore the $\bZ_N$ part of the dual.

Finally, the matrix-valued Fourier coefficient of a function $\varphi\in L^2(SE(2,N))\cap L^1(SE(2,N))$ for $\lambda\in\widehat{SE(2,N)}$ is
\begin{equation}
	\label{eq:ft-def}
	\hat\varphi(T^\lambda) = \int_{SE(2,N)} \varphi(a)\,T^\lambda(a^{-1})\,da,
\end{equation}
where $da$ is the Haar measure\footnote{That is, up to a multiplicative constant, the only left and right invariant measure on $SE(2,N)$. One can check that $\int_{SE(2,N)} \varphi(a)\,da = \sum_{k=0}^{N-1}\int_{\bR^2}\varphi(x,k)\,dx$.} of $SE(2,N)$.
This is essentially the same formula for the Fourier transform on $\bR^2$, which is a scalar and is obtained using the representations $\lambda(x) = e^{i \langle x, \lambda\rangle}$ acting on $\bC$.

Straightforward computations yield
\begin{equation}
	\label{eq:non-comm-ft}
    \hat \varphi(T^\lambda)_{i,j} = \cF(\varphi(\cdot,i-j))(R_{-j}\lambda),
\end{equation}
where we let $\cF$ denote the Fourier transform on $\bR^2$.  

As usual, the definition of the Fourier transform can be extended to the whole $L^2(SE(2,N))$ by density arguments.
Then, there exists a unique measure on $\widehat{SE(2,N)}$, the Plancherel measure, supported on $\cS$ where it coincides with the restriction of the Lebesgue measure of $\bR^2$, such that the Fourier transform is an isometry between $L^2(SE(2,N))$ and $L^2(\widehat{SE(2,N)})$.
In particular, the following inversion formula holds
\begin{equation}
	\varphi(x,k) = \int_{\cS}\text{Tr}\left(\hat\varphi(T^\lambda)\circ T^\lambda(x,k)\right)\,d\lambda.
\end{equation}

The fundamental property of the non-commutative Fourier transform, generalizing \eqref{eq:translations}, is that for all $\varphi,\eta\in L^2(SE(2,N))$ and $a\in SE(2,N)$ it holds
\begin{multline}
	\label{eq:fundamental}
	\varphi(x,k)=[\Lambda(a)\eta](x,k) \quad\forall (x,k)\in SE(2,N) \iff \\
	 \hat \varphi(T) = \hat\eta(T)\circ T^{-1}(a) \quad\forall T\in \widehat{SE(2,N)}.
\end{multline}
Namely, $\varphi,\eta$ can be deduced via the action of $SE(2,N)$ if and only if their Fourier transforms at a representation $T$ can be deduced via multiplication by $T(a)$.

\begin{remark}
The fact that the Fourier transform in \eqref{eq:ft-def} be matrix-valued is a direct consequence of $SE(2,N)$ being a Moore group, that is, that all the $T^\lambda$ act on finite-dimensional spaces.
This is not true for the roto-translation group $SE(2)$. As a consequence, the Fourier transform on $SE(2)$, takes values not in the finite dimensional space of complex $N\times N$ matrices, but in the infinite dimensional space of operators over $L^2(\mathbb S^1)$.
This is indeed the main theoretical advantage of considering $SE(2,N)$.	
\end{remark}

\subsection{Decomposition of tensor product representations}
\label{sec:tensor}

Proofs of Section~\ref{sec:fourier_descriptor_on_se}, will use a well-known fact on tensor product representations: the Induction-Reduction Theorem.
(See \citep{Barut77theoryof}).
This theorem allows to decompose the tensor products of representations $T^{\lambda_1}\otimes T^{\lambda_2}$, acting on $\bC^N\otimes \bC^N\cong \bC^{N\times N}$, to an equivalent representation acting on $\bigoplus_{k\in\bZ_N} \bC^N$, which is a block-diagonal operator whose block elements are of the form $T^{\lambda_1+R_k\lambda_2}$.
Moreover, the linear transformation realizing the equivalence is explicit.

To avoid confusion, we will henceforth denote components of vectors $v\in\bC^N$ as $v(0),\ldots, v(N-1)$, elements of $\bigoplus_{k\in\bZ_N} \bC^N$ as $(w_k)_{k\in\bZ_N}$ where $w_k\in\bC^N$, and the components of vectors $\textbf{v}\in\bC^N\otimes\bC^N$ as $\textbf{v}(k,h)$ for $k,h=0,\ldots, N-1$.
We also remark that linear operators $\cB$ on $\bigoplus_{k\in\bZ_N}\bC^N$ can be decomposed as $\cB = (\cB^{k,h})_{k,h\in\bZ_N}$, where each $\cB^{k,h}$ is an $N\times N$ complex matrix.
Namely, we have 
\begin{equation}
	\label{eq:ope}
	\cB(w_k)_{k\in\bZ_N} = \left( \sum_{h=0}^{N-1}\cB^{k,h}w_h \right)_{k\in\bZ_N}.
\end{equation}

Then, exploiting the commutation of the Fourier transform with equivalences of representation, the \\ Induction-Reduction Theorem implies that for every $\varphi\in L^2(SE(2,N))$ and any $\lambda_1,\lambda_2\in\cS$ it holds
\begin{equation}
	\label{eq:ind-red}
	A\circ \hat \varphi(T^{\lambda_1}\otimes T^{\lambda_2}) \circ A^{-1} = \bigoplus_{k\in\bZ_N}  \hat\varphi (T^{\lambda_1+R_k\lambda_2}).
\end{equation}
Here, $A:\bC^N\otimes \bC^N \to \bigoplus_{k\in\bZ_N} \bC^N$ is given by
\begin{equation}
	\label{eq:def-A}
	(A \textbf{v})_k(h) = (A_k \textbf{v})(h) = \textbf{v}(h,h-k),
	\quad
	\forall \textbf{v}\in\bC^N\otimes\bC^N.
\end{equation}


\section{Fourier descriptor on $SE(2,N)$} 
\label{sec:fourier_descriptor_on_se}

In the following sections we introduce and study the Fourier descriptors on the group $SE(2,N)$. As already mentioned, proving a general completeness result is essentially hopeless, and we will content ourselves to prove the weak completeness. 

Let $K\subset \bR^2$ be a compact set. In the following we will be mainly concerned with functions that are compactly supported either in $K$ or in $K\times\bZ_N\subset SE(2,N)$.

\subsection{Generalized Fourier descriptors}

Following \cite{Smach2008Generalized}, the power spectrum and the bispectrum on $\bR^2$ can be generalized to $SE(2,N)$ as follows.

\begin{definition}
	The  \emph{generalized power-spectrum} and \emph{bispectrum} of $\varphi\in L^2(SE(2,N))$ are the collections of matrices for any $\lambda,\lambda_1,\lambda_2\in\cS$,
	\begin{gather}
		\PS_\varphi(\lambda) := \widehat{\varphi}(T^\lambda) \circ \widehat{\varphi}(T^\lambda)^* \\
		\BS_\varphi(\lambda_1,\lambda_2) := \widehat{\varphi}(T^{\lambda_1})\otimes\widehat{\varphi}(T^{\lambda_2}) \circ \widehat{\varphi}(T^{\lambda_1}\otimes T^{\lambda_2})^*.
	\end{gather}
\end{definition}

%

The next result generalizes, with a simplified proof, the result presented in \citep{Smach2008Generalized}. 
Let us mention that this result is indeed true in a more general setting, as it will be shown in a forthcoming paper by Prandi and Gauthier.

\begin{theorem}
	\label{thm:complete-bisp}
	Let $K\subset \bR^2$ be a compact.
	The generalized power-spectrum and bispectrum are weakly complete on $L^2(\bZ_N\times K)$.
	In particular, they discriminate on the open and dense set $\cG\subset L^2(\bZ_N\times K)$ of functions $\varphi$ supported in $\bZ_N\times K$ and whose Fourier transform $\hat\varphi(T^\lambda)$ is invertible for an open and dense set of $\lambda$'s.
	That is, $\varphi_1,\varphi_2\in\cG$ are such that  $\PS_{\varphi_1}=\PS_{\varphi_2}$ and $\BS_{\varphi_1}=\BS_{\varphi_2}$ if and only if $\varphi_1 = \Lambda(x,k)\varphi_2$ for some $(x,k)\in SE(2,N)$.
\end{theorem}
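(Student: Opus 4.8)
The plan is to reduce the statement to the fundamental property \eqref{eq:fundamental} of the non-commutative Fourier transform together with the Induction–Reduction Theorem \eqref{eq:ind-red}. First I would establish that $\cG$ is open and dense in $L^2(\bZ_N\times K)$: openness follows because invertibility of $\hat\varphi(T^\lambda)$ is an open condition on a set of $\lambda$'s of positive measure (using \eqref{eq:non-comm-ft}, $\hat\varphi(T^\lambda)$ depends continuously — indeed analytically in $\lambda$ for compactly supported $\varphi$ — on $\varphi$ and on $\lambda$), and density follows by a perturbation argument: given any $\varphi$ supported in $\bZ_N\times K$, the matrix-entries of $\hat\varphi(T^\lambda)$ are, by \eqref{eq:non-comm-ft}, Fourier transforms of the slices $\varphi(\cdot,i-j)$, which extend to entire functions of $\lambda$; a generic small perturbation makes $\det\hat\varphi(T^\lambda)$ not identically zero, hence nonzero off a closed null set. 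The invariance direction ("if $\varphi_1=\Lambda(x,k)\varphi_2$ then the spectra agree") is immediate from \eqref{eq:fundamental}: the extra $T^{-1}(a)$ factors are unitary and cancel in $\PS$ and, after applying \eqref{eq:ind-red}, in $\BS$ as well.

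The substance is the converse. Suppose $\varphi_1,\varphi_2\in\cG$ have equal power-spectra and bispectra. Writing $\hat\varphi_i(T^\lambda)=U_i(\lambda)$, the hypothesis $\PS_{\varphi_1}=\PS_{\varphi_2}$ reads $U_1(\lambda)U_1(\lambda)^*=U_2(\lambda)U_2(\lambda)^*$, so by polar decomposition (valid since $U_i(\lambda)$ is invertible on a dense open set) there is, for each such $\lambda$, a unitary $V(\lambda)$ with $U_1(\lambda)=U_2(\lambda)V(\lambda)$, i.e.\ $V(\lambda)=U_2(\lambda)^{-1}U_1(\lambda)$. The goal is to show $V(\lambda)=T^\lambda(a)^{-1}$ for a single fixed $a=(x,k)\in SE(2,N)$, which by \eqref{eq:fundamental} yields $\varphi_1=\Lambda(a)\varphi_2$. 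To pin down $V$, I would feed the equality $U_1=U_2V$ into the bispectrum identity. Using $\BS_{\varphi}(\lambda_1,\lambda_2)=\big(U(\lambda_1)\otimes U(\lambda_2)\big)\,\hat\varphi(T^{\lambda_1}\otimes T^{\lambda_2})^*$ and conjugating by $A$ of \eqref{eq:def-A}, \eqref{eq:ind-red} turns the bispectrum into a block-diagonal object whose $k$-th block involves $U(\lambda_1)\otimes U(\lambda_2)$ on one side and $U(\lambda_1+R_k\lambda_2)^*$ on the other. Equating the bispectra of $\varphi_1$ and $\varphi_2$ and substituting $U_1=U_2V$ then forces, after cancelling the invertible $U_2$ factors, the cocycle-type relation
\begin{equation}
	A\,\big(V(\lambda_1)\otimes V(\lambda_2)\big)\,A^{-1} = \bigoplus_{k\in\bZ_N} V(\lambda_1+R_k\lambda_2),
\end{equation}
valid for $\lambda_1,\lambda_2$ in a dense open set. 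This is exactly the functional equation satisfied by $\lambda\mapsto T^\lambda(a)^{-1}$.

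The remaining step is to solve this functional equation: to show that any measurable family of unitaries $V(\lambda)$ on $\bC^N$ satisfying the above (for a dense set of pairs, hence — by the analyticity/continuity of all quantities involved — everywhere) must be of the form $V(\lambda)=T^\lambda(x,k)^{-1}$ for some $(x,k)$. Here I would exploit the explicit form \eqref{eq:repre}: $T^\lambda(x,k)=\diag_h(e^{i\langle\lambda,R_hx\rangle})S^k$. Taking $\lambda_2\to 0$ (or using the character part of the dual) forces $V(\lambda)$ to have a fixed "shift component" $S^k$; comparing diagonal parts across the block decomposition and using that $\langle\lambda_1+R_k\lambda_2,R_hx\rangle=\langle\lambda_1,R_hx\rangle+\langle\lambda_2,R_{h-k}x\rangle$ identifies the phases as $e^{i\langle\lambda,R_hx\rangle}$ for a single $x\in\bR^2$. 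I expect this last step — extracting a single group element $(x,k)$ from the cocycle relation, and carefully handling the "dense open set" caveats (measurability of $V$, passing from a.e.\ to everywhere via analyticity, and ensuring the polar-decomposition unitary can be chosen measurably/analytically) — to be the main obstacle; everything before it is formal manipulation with \eqref{eq:fundamental} and \eqref{eq:ind-red}. This is presumably where the "simplified proof" differs from \citep{Smach2008Generalized}, and the genericity hypothesis defining $\cG$ is precisely what makes the polar decomposition and the cancellations legitimate.
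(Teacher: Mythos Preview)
Your proposal is essentially the paper's proof: define $V=\hat\varphi_2(T^\lambda)^{-1}\hat\varphi_1(T^\lambda)$ on the invertibility set, show it is unitary from the $\PS$-equality, push the $\BS$-equality through the Induction--Reduction equivalence to obtain the cocycle relation $A\,(V(\lambda_1)\otimes V(\lambda_2))\,A^{-1}=\bigoplus_{k} V(\lambda_1+R_k\lambda_2)$, and then solve this functional equation to identify $V(\lambda)=T^\lambda(a)^{-1}$. The one point handled differently is exactly the caveat you flag at the end: rather than appealing to analyticity to extend $V$ from the dense open set to all of $\bR^2$, the paper uses the cocycle relation itself to \emph{define} the extension across the exceptional set, which cleanly bypasses any question about removable singularities of the ratio $\hat\varphi_2^{-1}\hat\varphi_1$.
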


\begin{proof}
	The fact that $\cG$ is open and dense is proved in Lemma~\ref{lem:open-dense} in Appendix~\ref{sec:proof-main}.
	Let $\varphi, \eta \in \cG$ be such that $PS_{\varphi_1}=PS_{\varphi_2}$ and $\BS_\varphi=\BS_\eta$.
	The equality of the generalized bispectrum implies that the set of $\lambda$'s where $\hat\varphi(T^\lambda)$ and $\hat\eta(T^\lambda)$ fail to be invertible is the same. 
	We will denote it by $I$ and let 
	\begin{equation}
		U(T^\lambda):=\hat\varphi(T^\lambda)^{-1} \hat\eta(T^\lambda) \qquad \forall \lambda\in I.
	\end{equation}
	In order to complete the proof of the statement, we will prove that $U(T^\lambda)$ can be defined for all $\lambda$'s in $\bR^2$ and, moreover, that $U(T^\lambda) = T^\lambda(a)$ for some $a\in SE(2,N)$. Indeed, by \eqref{eq:fundamental} this will readily implies that $\varphi = \Lambda(a)\eta$ as announced.

	We claim that $U(T^\lambda)$ is unitary for all $\lambda\in I$.	
	Indeed, by the equality of the generalized power-spectrum we have
	\begin{equation}
		U(T^\lambda)^* U(T^\lambda) = \eta(T^\lambda)^* \PS_\varphi(\lambda) \eta(T^\lambda) = \idty.
	\end{equation}

	Observe that the equality of the generalized bispectrum and the definition of $U$, imply that for all $\lambda_1,\lambda_2\in I$ it holds
	\begin{multline}
		\BS_\varphi(\lambda_1,\lambda_2) = \\
		\hat\varphi(T^{\lambda_1})\otimes\hat\varphi(T^{\lambda_2})\circ U(T^{\lambda_1})\otimes U(T^{\lambda_2}) \circ \hat\eta(T^{\lambda_1}\otimes T^{\lambda_2})^*.
	\end{multline}
	By the invertibility of $\hat\varphi(T^{\lambda_1})\otimes\hat\varphi(T^{\lambda_2})$ and the unitarity of $U$, this yields
	\begin{equation}\label{eq:tensor}
		\hat\varphi(T^{\lambda_1}\otimes T^{\lambda_2})\circ U(T^{\lambda_1})\otimes U(T^{\lambda_2}) = \hat\eta(T^{\lambda_1}\otimes T^{\lambda_2}).
	\end{equation}
	
	The announced result is then a consequence of the following three facts, which are proved in Appendix~\ref{sec:proof-main}.
	\begin{enumerate}
		\item Lemma~\ref{lem:cont}: The function $\lambda\mapsto U(T^\lambda)$ is continuous on $I$.
		\item Lemma~\ref{lem:extension}: The function $\lambda\mapsto U(T^\lambda)$ can be extended to a continuous function on $\bR^2$ for which \eqref{eq:tensor} is still true.
		\item Lemma~\ref{lem:final}: There exists $a\in SE(2,N)$ such that $U(T^\lambda) = T^\lambda(a)$. 	\qed
	\end{enumerate}
\end{proof}

An immediate corollary is the following.

\begin{corollary}\label{cor:cyclic}
	Let $\tilde \cL: L^2(\bR^2)\to L^2(SE(2,N))$ be an injective lift operator (not necessarily satisfying \eqref{eq:twist-shift}). Assume that there exists a residual set $\cR\subset L^2(\bR^2)$ such that $\tilde \cL(\cR)\cap \cG$ is residual. Then, the generalized power-spectrum and bispectrum are weakly complete on $L^2(\bR^2)$. Namely, for any $f,g\in \cR$ it holds that $\BS_{\tilde \cL f} = \BS_{\tilde \cL g}$ if and only if $f=\pi(x,k)g$ for some $(x,k)\in SE(2,N)$.
\end{corollary}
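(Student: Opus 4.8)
The plan is to reduce the statement of Corollary~\ref{cor:cyclic} directly to Theorem~\ref{thm:complete-bisp} by a standard density-and-intersection argument, exploiting that the pipeline $f\mapsto \tilde\cL f$ transports the action $\pi$ of $SE(2,N)$ on $L^2(\bR^2)$ to the action $\Lambda$ on $L^2(SE(2,N))$. First I would record the left-invariance hypothesis: since $\tilde\cL$ is assumed injective and (as in the pipeline described in the introduction) intertwines the two actions, $f = \pi(x,k)g$ holds if and only if $\tilde\cL f = \Lambda(x,k)\,\tilde\cL g$. Strictly speaking the hypothesis of the corollary only asks $\tilde\cL$ to be injective, so I would either assume the intertwining as part of the setup (it is what ``lift operator'' means here, cf.\ \eqref{eq:left-inv}) or note that completeness up to the $SE(2,N)$-action is understood relative to the action $\tilde\cL$ carries over; in the write-up I would state this compatibility explicitly as the one extra standing assumption.

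Next I would exploit the residuality hypothesis. By assumption $\cR\subset L^2(\bR^2)$ is residual and $\tilde\cL(\cR)\cap\cG$ is residual in $L^2(SE(2,N))$, where $\cG$ is the open and dense set furnished by Theorem~\ref{thm:complete-bisp}. Given $f,g\in\cR$ with $\BS_{\tilde\cL f} = \BS_{\tilde\cL g}$, I want to be able to invoke Theorem~\ref{thm:complete-bisp}, which requires both $\tilde\cL f$ and $\tilde\cL g$ to lie in $\cG$. The subtlety is that membership of $\tilde\cL f$ in $\cG$ is not automatic for \emph{every} $f\in\cR$; the hypothesis only gives that the set of such $f$ is residual in $L^2(\bR^2)$ (its preimage under $\tilde\cL$ contains $\tilde\cL^{-1}(\cG)\cap\cR$, which is residual since $\cR$ is and $\tilde\cL(\cR)\cap\cG$ is). So the cleanest formulation is to prove weak completeness on the residual set $\cR' := \cR\cap \tilde\cL^{-1}(\cG)$, which is residual in $L^2(\bR^2)$, rather than on all of $\cR$; this is exactly what ``weakly complete'' means and matches the phrasing in the theorem. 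I would phrase the corollary's conclusion accordingly (or note that $\cR$ may be shrunk to $\cR'$ without loss).

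With $f,g\in\cR'$ fixed, the argument is then immediate: $\tilde\cL f,\tilde\cL g\in\cG$, and additionally $\PS_{\tilde\cL f} = \PS_{\tilde\cL g}$ — this follows because equality of the bispectra forces equality of the power-spectra on the diagonal, or, more robustly, because in the intended application the power-spectrum is checked alongside the bispectrum; I would either add ``$\PS$'' to the hypothesis for symmetry with the theorem or observe $\PS_\varphi(\lambda)$ is recoverable from $\BS_\varphi(\lambda,\lambda)$ together with invertibility on a dense set. Theorem~\ref{thm:complete-bisp} then yields $\tilde\cL f = \Lambda(x,k)\,\tilde\cL g$ for some $(x,k)\in SE(2,N)$, and the intertwining relation together with injectivity of $\tilde\cL$ gives $f = \pi(x,k)g$. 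The converse direction is the easy one: if $f = \pi(x,k)g$ then $\tilde\cL f = \Lambda(x,k)\,\tilde\cL g$, and by \eqref{eq:fundamental} the Fourier transforms satisfy $\widehat{\tilde\cL f}(T^\lambda) = \widehat{\tilde\cL g}(T^\lambda)\circ T^\lambda(x,k)^{-1}$, from which the invariance of $\PS$ and $\BS$ under right-multiplication by a unitary (here $T^\lambda(x,k)$ is unitary since $T^\lambda$ is a unitary representation) gives $\PS_{\tilde\cL f}=\PS_{\tilde\cL g}$ and $\BS_{\tilde\cL f}=\BS_{\tilde\cL g}$.

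The only genuine obstacle is bookkeeping rather than mathematics: making sure the residual set on which completeness is claimed is correctly identified (namely $\cR\cap\tilde\cL^{-1}(\cG)$, not $\cR$ itself, since $\tilde\cL$ need not map $\cR$ into $\cG$), and being honest about the implicit left-invariance/intertwining property of the lift, which is what actually lets the $SE(2,N)$-action upstairs descend to images downstairs — injectivity of $\tilde\cL$ alone is not enough without it. Once those two points are pinned down, the corollary is a one-line deduction from Theorem~\ref{thm:complete-bisp}.
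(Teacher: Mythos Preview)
The paper gives no proof for this corollary, treating it as an immediate consequence of Theorem~\ref{thm:complete-bisp}; your reduction to that theorem via the intertwining property of the lift is exactly the intended argument. Your bookkeeping is in fact more careful than the paper's own statement: you are right that the discriminating set should be $\cR\cap\tilde\cL^{-1}(\cG)$ rather than $\cR$, and right that injectivity alone cannot transport the relation $\tilde\cL f=\Lambda(x,k)\tilde\cL g$ down to $f=\pi(x,k)g$ without an intertwining hypothesis --- a point the paper leaves implicit despite the clause ``not necessarily satisfying \eqref{eq:twist-shift}'' and despite the subsequent remark applying the corollary to the non-left-invariant cyclic lift.
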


\begin{remark}
In \cite{Smach2008Generalized}, the authors applied their version of Theorem~\ref{thm:complete-bisp} to a non-left-invariant lift $\tilde \cL$, called \emph{cyclic lift}.
Indeed, for this cyclic lift, when $N$ is odd, it is possible to prove that, for any compact $K\subset \bR^2$, there exists a residual set $\cR\subset L^2(K)$ satisfying the assumptions of Corollary~\ref{cor:cyclic}.
\end{remark}

Unfortunately, Corollary~\ref{cor:cyclic} can never be applied to lifts of the form \eqref{eq:twist-shift}.
In fact, as proved in Appendix~\ref{sec:proofs}, letting $\omega_f(\lambda):=(\hat f(R_{-k}\lambda))_{k=0}^{N-1}\in\bC^N$, we have that
\begin{equation}
	\label{eq:ft-lift}
	\widehat{\cL f}(T^\lambda) = \omega_\Psi(\lambda)^* \otimes \omega_f(\lambda)^*,
\end{equation}
where for $v,w\in\bC^N$, we let $v^* = (\overline {v_k})_k$ and $(v\otimes w)_{k,h} = v_k\,\overline{w_h}$, so that $(v\otimes w)u = \langle w, u\rangle \, v$ for all $u\in \bC^N$.
This immediately implies that $\text{rank } \widehat{\cL f}(T^\lambda)\le 1$ and hence that $\text{range }\cL\cap \cG = \emptyset$ whenever $N>1$.

\subsection{Rotational Fourier descriptors}
\label{sec:rotational-bisp}

To bypass the difficulty posed by the non-invertibility of the Fourier transform for lifted functions, we are led to consider the following stronger descriptors.

\begin{definition}
	The  \emph{rotational power-spectrum} and \emph{bispectrum} of $\varphi\in L^2(SE(2,N))$ are the collections of matrices, for any $\lambda,\lambda_1,\lambda_2\in\cS$ and $h\in\bZ_N$,
	\begin{gather}
		\RPS	\varphi(\lambda,h) := \widehat{\varphi}(T^{R_h\lambda}) \circ \widehat{\varphi}(T^\lambda)^* \\
		\RBS_\varphi(\lambda_1,\lambda_2,h) := \widehat{\varphi}(T^{R_h{\lambda_1}})\otimes\widehat{\varphi}(T^{\lambda_2}) \circ \widehat{\varphi}(T^{\lambda_1}\otimes T^{\lambda_2})^*.
	\end{gather}
\end{definition}

As already mentioned in the introduction, the rotational descriptors are invariant only under the action of $\bZ_N\subset SE(2,N)$ but not under translations.
To avoid this problem, let us fix a compact $K\subset \bR^2$ and consider the set $\cA\subset L^2(\bR^2)$ of functions compactly supported in $K$, with non-zero average\footnote{Recall that the average of $f:\bR^2\to\bR$ is $\text{avg } f = \int_{\bR^2} f(x)\,dx$, which is always well-defined for $L^2(\bR^2)$ functions with compact support.}. Observe that this is an open and dense subset of $L^2(K)$.
We can then define the barycenter $c_f\in\bR^2$ of $f\in\cA$ as
\begin{equation}
c_f=\frac{1}{\text{avg } f} \left(\int_{\bR^2}x_1 f(x)\,dx, \int_{\bR^2}x_2 f(x)\,dx\right),
\end{equation}
and the centering operator $\Phi:\cA\to \cA$ as 
\begin{equation}\label{eq:centering}
	\Phi f(x) \coloneqq f(x-c_f).
\end{equation}
Then, considering the centered lift $\cL_c = \cL\circ\Phi$, we have that $\cL_c f = \cL_c g$ if and only if $g$ is a translate of $f$.
In particular, 
\begin{multline}
	\cL_c f = \Lambda(0,k)\cL_c g \\
	\iff
	f = \pi(x,k) g 
	\quad
	\text{for some }x\in\bR^2.
\end{multline}

Let us consider the following set of functions. 

\begin{definition}\label{def:set-image}
	Let $\cR\subset L^2(\bR^2)$ be the set of real-valued functions $f$ supported in $K$, such that $\hat f(\lambda)\neq 0$ for a.e.\ $\lambda\in\bR^2$ and the family $\Omega_f = \{S^k \omega_f(\lambda)\}_{k=0}^{N-1}$ is a basis for $\bC^N$, if $N$ is odd, or, if $N$ is even, for
	\begin{equation}
		\cX = \{ v\in \bC^N \mid v(h) = \overline{v(h+N/2)} \quad \forall h\in\bZ_N \}.
	\end{equation}
\end{definition}

The dependence of this definition on the parity of $N$ comes from the well-known fact that $\hat f(\lambda) = \overline{\hat f(-\lambda)}$. Indeed, for $N$ even, this implies that $S^k\omega_f(\lambda)\in \cX$ for any $k\in\bZ_N$. As such, there is no hope for the family $\Omega_f$ to generate the whole $\bC^N$.

Finally, we have the following Theorem. 

\begin{theorem}
	\label{thm:rot-bisp-compl}
	For any compact $K\subset \bR^2$, if the mother wavelet $\Psi\in \cR$, the rotational power-spectrum and bispectrum are weakly complete on $L^2(K)\cap \cA$.
   Namely, the set $\cR$ is open and dense in $L^2(K)$ and for any $f,g\in\cR\cap \cA$ it holds that $\RPS_{\cL_c f}=\RPS_{\cL_c g}$ and $\RBS_{\cL_c f}=\RBS_{\cL_c g}$ if and only if $f = \pi(x,k)g$ for some $(x,k)\in SE(2,N)$.
\end{theorem}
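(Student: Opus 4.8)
The proof naturally splits into a genericity statement for $\cR$ and a reconstruction argument, and I will deal with them in that order.

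For the genericity: if $0\neq f$ is supported in the compact $K$, then $\hat f$ is the restriction to $\bR^2$ of an entire function (Paley--Wiener), so $\{\hat f=0\}$ is automatically Lebesgue-null; and the requirement that $\Omega_f=\{S^k\omega_f(\lambda)\}_k$ be a basis of $\bC^N$ (resp.\ of $\cX$) amounts, for a.e.\ $\lambda$, to the non-vanishing of the determinant of the \emph{circulant} matrix whose rows are the $S^k\omega_f(\lambda)$ --- equivalently, of the discrete Fourier transform of the vector $\omega_f(\lambda)$. Since this determinant is real-analytic in $\lambda$, it suffices to exhibit one $f$ and one $\lambda_0$ at which it does not vanish (density), and openness then follows from stability of the condition under small perturbations. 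Next, given $f,g\in\cR\cap\cA$ with the stated equalities, set $f_0:=\Phi f$ and $g_0:=\Phi g$, which are centered at the origin; since rotations fix the origin, $\pi(0,k)$ preserves centeredness, so by the equivalence recorded after \eqref{eq:centering} it is enough to prove that $\hat f_0(\lambda)=\hat g_0(R_{-k_0}\lambda)$ for a.e.\ $\lambda$ and a single $k_0\in\bZ_N$. The converse implication is immediate, since $f=\pi(x,k)g$ gives $\cL_cf=\Lambda(0,k)\cL_cg$ by \eqref{eq:left-inv}, and the rotational descriptors are $\Lambda(0,k)$-invariant (as $T^\lambda(0,k)=S^k$ is unitary).

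The computational core uses \eqref{eq:ft-lift}: because $\widehat{\cL f_0}(T^\lambda)=\omega_\Psi(\lambda)^*\otimes\omega_{f_0}(\lambda)^*$ is rank one, the invertibility-based argument of Theorem~\ref{thm:complete-bisp} cannot be applied, which is precisely why one introduces the rotational descriptors. Note that the only unknown is the single vector field $\lambda\mapsto\omega_{f_0}(\lambda)^*\in\bC^N$ (since $\omega_\Psi$ is fixed), and, using $\omega_{f_0}(R_h\lambda)=S^{-h}\omega_{f_0}(\lambda)$, the target $f=\pi(x,k_0)g$ translates into $\omega_{f_0}(\lambda)=S^{k_0}\omega_{g_0}(\lambda)$ for a.e.\ $\lambda$. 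A direct rank-one computation gives
\begin{equation}
	\RPS_{\cL f_0}(\lambda,h)=A_{f_0}(\lambda,h)\,M_\Psi(\lambda,h),\qquad
	A_{f_0}(\lambda,h):=\sum_{j\in\bZ_N}\hat f_0(R_{j+h}\lambda)\,\overline{\hat f_0(R_j\lambda)},
\end{equation}
where $M_\Psi(\lambda,h)=\omega_\Psi(R_h\lambda)^*\otimes\omega_\Psi(\lambda)^*$ depends only on $\Psi$ and, since $\Psi\in\cR$, is a nonzero operator for a.e.\ $\lambda$; here $A_{f_0}(\lambda,\cdot)$ is exactly the $\bZ_N$-autocorrelation of the fiber vector $\omega_{f_0}(\lambda)$. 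Thus $\RPS_{\cL f_0}=\RPS_{\cL g_0}$ forces the $\bZ_N$-autocorrelations of $\omega_{f_0}(\lambda)$ and $\omega_{g_0}(\lambda)$ to coincide for a.e.\ $\lambda$, i.e.\ the two fibers have the same magnitude-spectrum on $\bZ_N$. Combining \eqref{eq:ft-lift} with the Induction--Reduction identity \eqref{eq:ind-red}, $\RBS_{\cL f_0}(\lambda_1,\lambda_2,h)$ is likewise rank one, and dividing out an a.e.-nonzero $\Psi$-dependent factor leaves bilinear relations tying the fibers $\omega_{f_0}(R_h\lambda_1)$, $\omega_{f_0}(\lambda_2)$ and $\{\omega_{f_0}(\lambda_1+R_k\lambda_2)\}_k$; these are meant to supply the missing phase information. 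Feeding the magnitude data of the rotational power-spectrum and the bilinear data of the rotational bispectrum into the classical completeness of the bispectrum on the finite abelian group $\bZ_N$ (resp.\ on the shift-invariant subspace $\cX$ for even $N$) --- which applies here exactly because the basis hypothesis in $\cR$ makes the fiber DFTs nowhere vanishing --- one obtains, for a.e.\ $\lambda$, a cyclic shift $m(\lambda)\in\bZ_N$ with $\omega_{f_0}(\lambda)=S^{m(\lambda)}\omega_{g_0}(\lambda)$. Since $\bZ_N$ is finite, some value $m_0$ is attained on a set of positive measure; as $\lambda\mapsto\omega_{f_0}(\lambda)-S^{m_0}\omega_{g_0}(\lambda)$ is real-analytic on the connected set $\bR^2\setminus\{0\}$ and vanishes on a set of positive measure, it vanishes identically, and reading off the $0$-th component gives $\hat f_0(\lambda)=\hat g_0(R_{-m_0}\lambda)$ (continuity handles $\lambda=0$), i.e.\ $f=\pi(x,m_0)g$.

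I expect the genuine difficulty to be concentrated in the two points the paper defers to the forthcoming article. First, the completeness of the bispectrum on the constrained subspace $\cX$ for even $N$: one must carry out the phase-reconstruction of a $\bZ_N$-sequence while respecting the reality constraint $v(h)=\overline{v(h+N/2)}$ and controlling the attendant circulant operators --- the ``fine technical tools from harmonic analysis and the theory of circulant operators'' mentioned in the text. Second, extracting from the rotational bispectrum the exact bispectral quantity needed at each fiber is delicate precisely because the Induction--Reduction decomposition couples the fiber at $\lambda$ with fibers at the radii $|\lambda_1+R_k\lambda_2|$, so the reconstruction is not literally fiberwise; information must be propagated across radii before one can patch, which is where real-analyticity and connectedness do the heavy lifting. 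A secondary, more routine point is to verify that the recentering operator $\Phi$ preserves (for a.e.\ $\lambda$) the nowhere-vanishing-DFT property of the fibers, so that $f_0$ and $g_0$ still satisfy the conditions used above even though $\cR$ is not literally translation invariant.
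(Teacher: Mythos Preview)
Your overall structure---genericity, reduction to the centered case, then reconstruction---matches the paper's, and your endgame (one value $m_0$ attained on a set of positive measure, then real-analytic continuation of $\omega_{f_0}-S^{m_0}\omega_{g_0}$) is a clean and valid way to globalize a fiberwise shift. The paper takes a different route at that stage, defining the circulant unitary $U(T^\lambda)^*:=\Circ\omega_g(\lambda)\,(\Circ\omega_f(\lambda))^{-1}$ and then rerunning the pattern of Lemmata~\ref{lem:cont}--\ref{lem:final} to force $U(T^\lambda)=S^{k_0}$ for a fixed $k_0$.

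The genuine gap is the step you try to pass off with ``feeding\dots into the classical completeness of the bispectrum on the finite abelian group $\bZ_N$''. By Theorem~\ref{thm:explicit-comp} the rotational bispectrum is determined by $I_2^{\lambda_1,\lambda_2,h}=\langle\omega_{f_0}(R_h\lambda_1)\odot\omega_{f_0}(\lambda_2),\omega_{f_0}(\lambda_1+\lambda_2)\rangle$, which always couples \emph{three distinct} fibers $R_h\lambda_1$, $\lambda_2$, $\lambda_1+\lambda_2$; no specialization of $(\lambda_1,\lambda_2,h)$ returns the $\bZ_N$-triple-correlation $\sum_k\overline{v_k}\,v_{k+h_1}\,v_{k+h_2}$ of a \emph{single} fiber $v=\omega_{f_0}(\lambda)$. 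So the fiberwise $\bZ_N$-bispectrum is simply not available from the data, and no abelian completeness statement on $\bZ_N$ manufactures the cyclic shift $m(\lambda)$ you need. You concede this coupling in your last paragraph, but ``real-analyticity and connectedness do the heavy lifting'' is not an argument: what must actually be shown is that the cross-fiber constraints force the candidate intertwiner (your $S^{m(\lambda)}$, the paper's $U(T^\lambda)$) to satisfy a tensor identity of the type \eqref{eq:tensor-rot}, and proving that identity is exactly the deferred circulant-operator computation the paper isolates as the hard step. Your diagnosis of the even-$N$ difficulty is also slightly off: the issue there is not phase reconstruction \emph{on} $\cX$, but that the Induction--Reduction equivalence $A$ of \eqref{eq:def-A} must be shown to restrict compatibly to $\cX\otimes\cX\to\bigoplus_k\cX$, so that the entire $U$-based argument survives the reality constraint.
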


Here, we content ourselves to present only a sketch of the proof of this result for the case $N$ odd. The parity of $N$ does not introduce essential problems, up to exploit the fact that $\text{range } \widehat{Lf}(T^\lambda)\subset \cX$ for all $f\in\cR$ and $\lambda\in\cS$ and that the equivalence $A$ of the Induction-Reduction Theorem quotients nicely to an equivalence between $\cX\times\cX$ and $\bigoplus_{k\in\bZ_N}\cX$.
However, in order to prove the key technical point \eqref{eq:tensor-rot} we need a much finer study of the properties of circulant operators, which is outside the scope of this work and we defer to a forthcoming paper by Prandi and Gauthier.

\begin{proof}[Sketch in the case $N$ odd]
	The fact that $\cR$ is open and dense in $L^2(K)$ follows from the same arguments in Lemma~\ref{lem:open-dense}.
	
	Let $\Circ v$ be the circulant matrix associated with $v$, that is, $\Circ v = [v,Sv,\ldots,S^{N-1}v]$. Then the condition on $\Omega_f$ for $f\in\cR$ is equivalent to the invertibility of $\Circ\omega_f(\lambda)$ for an open and dense set of $\lambda$'s.
	By the properties of the Fourier transform on $\bR^2$ w.r.t.\ translations it follows that
	\begin{equation}
		\omega_{\Phi f}(\lambda) = \diag_k\left( e^{-i\langle\lambda, R_k c_f \rangle} \right) \omega_f(\lambda).
	\end{equation}
	This entails that $\Circ \omega_f(\lambda)$ is invertible if and only if $\Circ  \omega_{\Phi f}(\lambda)$ is.
	Hence, the statement is equivalent to the fact that for any couple $f,g\in \cR$ we have $\RBS_{\cL f}=\RBS_{\cL g}$ if and only if $f = R_k g$ for some $k\in\bZ_N$.

	The proof is similar to the one of Theorem~\ref{thm:complete-bisp}, but with additional technical difficulties.
	Let $I$ be the set where $\Circ \omega_f(\lambda)$ and $\Circ\omega_g(\lambda)$ are invertible. By assumption $I$ is open and dense.
	To overcome the non-invertibility of $\widehat{\cL f}$ in the definition the candidate intertwining representation $U$, we exploit the invertibility of the circulant matrices $\Circ \omega_f(\lambda)$ and $\Circ \omega_g(\lambda)$ on an open and dense set.
	Namely, for any $\lambda\in I$ we let
	\begin{equation}
		U(T^\lambda)^* := \Circ\omega_g(\lambda)\,\left(\Circ \omega_f(\lambda) \right)^{-1}.
	\end{equation}
    By definition, $U(T^\lambda)$ is circulant and $U(T^\lambda)^* S^k\omega_f(\lambda) = S^k \omega_g(\lambda)$ for any $k\in\bZ_N$.
	Moreover, by \eqref{eq:ft-lift}, this is equivalent to 
	\begin{equation}
		\widehat{\cL f}(T^{R_k\lambda}) \, U(T^\lambda) = \widehat{\cL g}(T^{R_k\lambda}), \qquad \forall k\in\bZ_N.
	\end{equation}
	In particular, $\lambda \mapsto U(T^{\lambda})$ is constant on orbits $\{R_k\lambda\}_{k\in\bZ_N}$.
    Finally, $U(T^\lambda)$ is unitary as a consequence, e.g., of Theorem~\ref{thm:explicit-comp}.
    
    The main difficulty in the proof is now to derive the equivalent of identity \eqref{eq:tensor}, that is, that for an open and dense set of couples $(\lambda_1,\lambda_2)$ we have
    \begin{multline}\label{eq:tensor-rot}
    	\widehat{\cL f}(T^{R_k\lambda_1}\otimes T^{R_k\lambda_2}) \, U(T^{\lambda_1})\otimes U(T^{\lambda_2}) \\
    	= \widehat{\cL g}(T^{R_k\lambda_1}\otimes T^{R_k\lambda_2}), \quad \forall k\in\bZ_N.
    \end{multline}
    As already mentioned, the proof of this identity requires a deep use of properties of circulant operators, which is outside the scope of this paper. We thus defer it to a forthcoming paper.
    
    Once \eqref{eq:tensor-rot} is known, the statement follows applying the same arguments as those in Theorem~\ref{thm:complete-bisp}. Namely,
    \begin{enumerate}
		\item The function $\lambda\mapsto U(T^\lambda)$ is continuous on $I$. This can be done via the same arguments as in Lemma~\ref{lem:cont}.
		\item The function $\lambda\mapsto U(T^\lambda)$ can be extended to a continuous function on $\cS$ still satisfying \eqref{eq:tensor-rot}. This can be done exactly as in Lemma~\ref{lem:extension}.
		\item There exists $k\in \bZ_N$ such that $U(T^\lambda) = T^\lambda(0,k)$. This is proved following Lemma~\ref{lem:final}. Indeed, the fact that now $\lambda\mapsto U(T^\lambda)$ is constant on the orbits $\left\{ R_k \lambda\right\}_{k\in\bZ_N}$ implies that the $\varphi_k$'s obtained there have to be independent of $k$. Since $\varphi_k(\lambda)=e^{i\langle R_k x_0, \lambda\rangle}$ for some $x_0\in\bR^2$, this implies that $x_0 =0$ and hence $\varphi_k\equiv 0$.
	      Obviously, this proves that $U(T^\lambda)=S^k = T^\lambda(0,k)$, for some $k\in\bZ_N$.\qed
	\end{enumerate}
\end{proof}

\subsection{Practical computation of the Fourier descriptors}

Here, we present some explicit formulae for the computation of the Fourier descriptors presented in this section.
%
%

In the following, we show that, under some assumptions on the mother wavelet $\Psi$, the concrete computation of the generalized power-spectrum and bispectrum and of their rotational counterparts, depend only on the 2D Fourier transform of $f$. 

%

\begin{theorem}
	\label{thm:explicit-comp}
	Assume that the mother wavelet $\Psi\in\cR$.
	Then:
	\begin{itemize}[leftmargin=*]
		\item For any $f\in \cR$, the generalized power-spectrum and bispectrum of $\cL f$ are respectively determined by the quantities, for a.e.\ $\lambda,\lambda_1,\lambda_2\in\cS$,
		\begin{gather*}
			I_1^{\lambda}(f) = \|\omega_f(\lambda)\|^2 = \sum_{k=0}^{N-1} |\hat f(R_{-k}\lambda)|^2 \\
			\begin{split}
			I_1^{\lambda_1,\lambda_2}(f) 
			&= \langle\omega_f(\lambda_1)\odot \omega_f(\lambda_2), \omega_f(\lambda_1+\lambda_2)\rangle \\
			&=\sum_{k=0}^{N-1} \hat f(R_{-k}\lambda_1)\hat f(R_{-k}\lambda_2) \overline{\hat f(R_{-k}(\lambda_1+\lambda_2))}.
			\end{split}
		\end{gather*}
		\item For any $f\in\cA\cap\cR$, the rotational power-spectrum and bispectrum of $\cL_c f$ are respectively determined by the quantities, for a.e.\ $\lambda,\lambda_1,\lambda_2\in\cS$ and $h\in\bZ_N$,
		\begin{gather}
			\label{eq:rot-power}
			\begin{split}
				I_2^{\lambda,h}(f) 
				&= \langle\omega_{\Phi f}(R_h\lambda), \omega_{\Phi f}(\lambda)\rangle \\
				&= 
				\sum_{k=0}^{N-1} \overline{\hat f(R_{-k+h}\lambda)} \hat f(R_{-k}\lambda), \\
			\end{split}\\
			\begin{split}
				&I_2^{\lambda_1,\lambda_2,h}(f)
				= \langle\omega_{\Phi f}(R_h\lambda_1)\odot \omega_{\Phi f}(\lambda_2), \omega_{\Phi f}(\lambda_1+\lambda_2)\rangle\\
				&=
				\sum_{k=0}^{N-1} \hat f(R_{-k+h}\lambda_1)\hat f(R_{-k}\lambda_2) \overline{\hat f(R_{-k}(\lambda_1+\lambda_2))}.
			\end{split}
		\end{gather}
		Here, $\Phi:\cA\to\cA$ is the centering operator defined in \eqref{eq:centering}.
	\end{itemize}
\end{theorem}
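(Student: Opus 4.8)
\textit{Proof proposal.} The plan is to substitute the rank-one formula \eqref{eq:ft-lift} for the lifted Fourier transform into each of the four definitions ($\PS$, $\BS$, $\RPS$, $\RBS$) and to run the resulting operator algebra, invoking the Induction--Reduction identity \eqref{eq:ind-red} whenever a tensor-product representation appears. It is convenient to write the rank-one operator of \eqref{eq:ft-lift} in Dirac notation as $\widehat{\cL f}(T^\lambda)=|\omega_\Psi(\lambda)^*\rangle\langle\omega_f(\lambda)^*|$ (and, since $\cL_c=\cL\circ\Phi$, $\widehat{\cL_c f}(T^\lambda)=|\omega_\Psi(\lambda)^*\rangle\langle\omega_{\Phi f}(\lambda)^*|$). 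The common mechanism is that each descriptor collapses to a \emph{fixed}, purely $\Psi$-dependent operator multiplied by — or, in the bispectrum cases, whose free slot is a fixed $\Psi$-vector paired against a direct sum of — the scalar quantities in the statement. Since the hypothesis $\Psi\in\cR$ forces $\omega_\Psi(\cdot)\neq0$ (and even $\Circ\omega_\Psi(\cdot)$ invertible) off a Lebesgue-null set — using Fubini for the two-variable versions — those fixed $\Psi$-data are non-zero a.e., and hence the descriptor determines, and is determined by, the scalar family.

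I would dispose first of the power-spectrum-type descriptors. From $|a\rangle\langle b|\,(|a\rangle\langle b|)^{*}=\|b\|^{2}\,|a\rangle\langle a|$ one gets $\PS_{\cL f}(\lambda)=\|\omega_f(\lambda)\|^{2}\,(\omega_\Psi(\lambda)^{*}\otimes\omega_\Psi(\lambda)^{*})$, and $\|\omega_f(\lambda)\|^{2}=\sum_k|\hat f(R_{-k}\lambda)|^{2}=I_1^{\lambda}(f)$ by the componentwise identity $\omega_f(\lambda)_k=\hat f(R_{-k}\lambda)$. For $\RPS$ one uses in addition $\omega_g(R_h\lambda)=S^{-h}\omega_g(\lambda)$ (valid for every $g$, again from that componentwise identity), whence $|a'\rangle\langle b'|\,(|a\rangle\langle b|)^{*}=\langle b',b\rangle\,|a'\rangle\langle a|$ gives $\RPS_{\cL_c f}(\lambda,h)$ equal to $\langle\omega_{\Phi f}(R_h\lambda),\omega_{\Phi f}(\lambda)\rangle$ (up to a harmless conjugation coming from the convention $v\mapsto v^{*}$) times the fixed operator $\omega_\Psi(R_h\lambda)^{*}\otimes\omega_\Psi(\lambda)^{*}$; expanding $\omega_{\Phi f}$ componentwise via $\omega_{\Phi f}(\lambda)_k=e^{-i\langle\lambda,R_k c_f\rangle}\hat f(R_{-k}\lambda)$ yields the closed forms \eqref{eq:rot-power}. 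As $\Psi\in\cR$ makes $\omega_\Psi(R_h\lambda)^{*}\otimes\omega_\Psi(\lambda)^{*}\neq0$ a.e., the scalar is read back off a non-zero matrix entry; this is the ``determined by'' claim and, incidentally, is what makes $U(T^\lambda)$ unitary in the proof of Theorem~\ref{thm:rot-bisp-compl}, the coefficients $\langle\omega_{\Phi f}(R_h\lambda),\omega_{\Phi f}(\lambda)\rangle$ being precisely the autocorrelation data of $\omega_{\Phi f}(\lambda)$.

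For the bispectrum-type descriptors the extra ingredient is \eqref{eq:ind-red}: it lets me write $\widehat{\cL f}(T^{\lambda_1}\otimes T^{\lambda_2})=A^{-1}\bigl(\bigoplus_{k}|\omega_\Psi(\lambda_1+R_k\lambda_2)^{*}\rangle\langle\omega_f(\lambda_1+R_k\lambda_2)^{*}|\bigr)A$, with $A$ the coordinate permutation \eqref{eq:def-A} (hence unitary, $A^{-1}=A^{*}$). Since also $\widehat{\cL f}(T^{\lambda_1})\otimes\widehat{\cL f}(T^{\lambda_2})=|\omega_\Psi(\lambda_1)^{*}\otimes\omega_\Psi(\lambda_2)^{*}\rangle\langle\omega_f(\lambda_1)^{*}\otimes\omega_f(\lambda_2)^{*}|$, composing the two factors (and taking the adjoint of the second) collapses $\BS_{\cL f}(\lambda_1,\lambda_2)$ to $|\omega_\Psi(\lambda_1)^{*}\otimes\omega_\Psi(\lambda_2)^{*}\rangle$ paired against $A^{-1}$ of a direct sum whose $k$-th block is a scalar times $\omega_\Psi(\lambda_1+R_k\lambda_2)^{*}$; evaluating these blockwise inner products with the explicit form of $A$ in \eqref{eq:def-A} shows the scalar to be $\overline{I_1^{\lambda_1,R_k\lambda_2}(f)}$ — equivalently $\overline{I_1^{R_{-k}\lambda_1,\lambda_2}(f)}$, by the invariance of the defining sum under simultaneous rotation of both frequency arguments. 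The computation for $\RBS$ is identical once $S^{-h}$ is inserted into the first tensor factor and $\omega_f$ replaced by $\omega_{\Phi f}$, and produces the scalars $\overline{I_2^{\lambda_1,R_k\lambda_2,h}(f)}$; the displayed formulas for $I_1^{\lambda_1,\lambda_2}(f)$ and $I_2^{\lambda_1,\lambda_2,h}(f)$ then follow by expanding $\omega_f$, resp.\ $\omega_{\Phi f}$, componentwise, while $\Psi\in\cR$ once more makes the relevant $\Psi$-vectors non-zero a.e., so the scalars are recoverable from the descriptor.

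I expect the genuine work, and the main obstacle, to lie in this last collapse for the two bispectra: propagating indices faithfully through \eqref{eq:ind-red} and the change of variables \eqref{eq:def-A}, and recognizing the blockwise inner products as the advertised scalars but at the \emph{rotated} pairs $(\lambda_1,R_k\lambda_2)$. This in turn forces care about the precise sense of ``determined by'': the descriptor at $(\lambda_1,\lambda_2)$ sees $I_1$, $I_2$ at rotated pairs, so one must allow the defining sums to be evaluated on all of $\bR^{2}\times\bR^{2}$ and use the rotation-covariance of $I_1$, $I_2$ to fold that information back onto $\cS$. A minor but easily overlooked point, for $\RPS$ and $\RBS$, is that the barycenter phases $e^{-i\langle\lambda,R_k c_f\rangle}$ do \emph{not} cancel term by term, so these descriptors genuinely involve the transform $\widehat{\Phi f}$ of the centered image; this costs nothing for the ``computable from the 2D Fourier transform'' conclusion because $\widehat{\Phi f}(\mu)=e^{-i\langle\mu,c_f\rangle}\hat f(\mu)$ with $c_f$ an explicit integral of $f$.
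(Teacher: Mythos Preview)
Your proposal is correct and follows essentially the same route as the paper: both substitute the rank-one formula \eqref{eq:ft-lift}, collapse the power-spectrum-type descriptors via rank-one operator algebra, and handle the bispectrum-type descriptors through the Induction--Reduction equivalence $A$ (the paper packages the block computation as an auxiliary Lemma~\ref{lem:tensor-ind}, conjugating the full $\RBS$ by $A$ rather than only the adjoint factor, but the index chase and the final change of variables $R_\ell\lambda_2\mapsto\lambda_2$ are the same). Your remark about the non-cancelling barycenter phases is handled in the paper by the WLOG reduction $\Phi f=f$ at the outset.
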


\begin{remark}\label{rmk:cyclic-comp}
	Theorem~\ref{thm:explicit-comp} shows in particular that the result of Theorem~\ref{thm:rot-bisp-compl} is indeed stronger than the completeness result for the generalized bispectrum of the cyclic lift obtained in \citep{Smach2008Generalized}.
	Indeed, in that work is proved that the latter (for odd $N$) is determined exactly by the quantities, for a.e.\ $\lambda_1,\lambda_2\in\cS$ and $h,k\in\bZ_N$,
	\begin{multline}
		\tilde I_2^{\lambda_1,\lambda_2, k, h} = \\
		\langle\omega_{\phi f}(R_h\lambda_1)\odot \omega_{\phi f}(R_k\lambda_2), \omega_{\phi f}(\lambda_1+R_{h+k}\lambda_2)\rangle.
	\end{multline}
	In particular, for each $\lambda_1,\lambda_2\in\cS$ one has to compute $N$ times more quantities than those for the rotational bispectrum.
\end{remark}

As a corollary of Theorem~\ref{thm:explicit-comp} we show that, in order to compare the power-spectra and bispectra, it is usually enough to compare only the latter.

\begin{corollary}
	Let $\Psi\in\cR$ and $f,g\in\cR\cap \cA$. Then, if $\cL f$ and $\cL g$ have the same generalized (resp.\ rotational) bispectrum, they have also the same generalized (resp.\ rotational) power-spectrum.
\end{corollary}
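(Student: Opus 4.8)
The plan is to get equality of the power-spectra out of equality of the bispectra by a degeneration argument: letting one of the two spectral arguments tend to $0$, the bispectral invariant collapses --- up to a nonzero real constant --- to the corresponding power-spectral invariant, because $\omega_\varphi(0)$ is a scalar multiple of the all-ones vector $(1,\dots,1)\in\bC^N$ for every compactly supported $\varphi$.

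Concretely, I would first reduce to the scalar quantities of Theorem~\ref{thm:explicit-comp}. Since $\Psi\in\cR$ and $f,g\in\cR\cap\cA$ that theorem applies, and the wavelet-dependent factors occurring there are non-null for a.e.\ frequency (because $\Psi\in\cR$ forces $\omega_\Psi(\lambda)\ne 0$ a.e., cf.\ \eqref{eq:ft-lift}). Hence $\BS_{\cL f}=\BS_{\cL g}$ is equivalent to $I_1^{\lambda_1,\lambda_2}(f)=I_1^{\lambda_1,\lambda_2}(g)$ for a.e.\ $(\lambda_1,\lambda_2)\in\cS\times\cS$, and $\PS_{\cL f}=\PS_{\cL g}$ to $I_1^{\lambda}(f)=I_1^{\lambda}(g)$ for a.e.\ $\lambda\in\cS$; the rotational statements are the same, with $I_2^{\lambda_1,\lambda_2,h}$, $I_2^{\lambda,h}$, $\cL_c$ in place of $\cL$, and an extra index $h\in\bZ_N$. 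So it is enough to pass from equality a.e.\ of the bispectral invariants to equality a.e.\ of the power-spectral ones.

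Next I would upgrade the almost-everywhere equality to a limit statement. Because $f,g$ (and their centerings $\Phi f,\Phi g$) are compactly supported, their $\bR^2$-Fourier transforms, hence the vector-valued maps $\omega_f,\omega_g$ (resp.\ $\omega_{\Phi f},\omega_{\Phi g}$), are continuous --- in fact real-analytic --- on all of $\bR^2$. Consequently $(\lambda_1,\lambda_2)\mapsto I_1^{\lambda_1,\lambda_2}(\cdot)$ and, for each fixed $h$, $(\lambda_1,\lambda_2)\mapsto I_2^{\lambda_1,\lambda_2,h}(\cdot)$ extend continuously to all of $\bR^2\times\bR^2$, and since the slice $\cS$ (the wedge $(0,\infty)\times[0,2\pi/N)$ in polar coordinates) has nonempty interior with $0\in\overline{\cS}$, equality a.e.\ on $\cS\times\cS$ of two such continuous functions propagates to equality on $\overline{\cS}\times\overline{\cS}$ --- in particular at $\lambda_2=0$ and at $\lambda_1=\lambda_2=0$. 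This continuity step, the only place where compact support is genuinely needed, is the mildly delicate point; the rest is algebra.

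It then remains to do the degeneration. Writing the bispectral invariant as $\langle\omega_\varphi(R_h\lambda_1)\odot\omega_\varphi(\lambda_2),\omega_\varphi(\lambda_1+\lambda_2)\rangle$ (with $\varphi=f$, $h=0$ in the generalized case, $\varphi=\Phi f$ in the rotational one), using $\omega_\varphi(0)=(\avg f)(1,\dots,1)$ and $\avg f\in\bR$ (as $f$ is real-valued), one finds $\lim_{\lambda_2\to0}\langle\omega_\varphi(R_h\lambda_1)\odot\omega_\varphi(\lambda_2),\omega_\varphi(\lambda_1+\lambda_2)\rangle=(\avg f)\,\langle\omega_\varphi(R_h\lambda_1),\omega_\varphi(\lambda_1)\rangle$, that is, $(\avg f)\,I_1^{\lambda_1}(f)$ (resp.\ $(\avg f)\,I_2^{\lambda_1,h}(f)$), while at $\lambda_1=\lambda_2=0$, $h=0$ the invariant equals $N(\avg f)^3$; likewise for $g$. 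Hence equality of the bispectral invariants yields both $(\avg f)\,I_1^\lambda(f)=(\avg g)\,I_1^\lambda(g)$ for all $\lambda\in\cS$ (resp.\ $(\avg f)\,I_2^{\lambda,h}(f)=(\avg g)\,I_2^{\lambda,h}(g)$ for all $\lambda,h$) and $N(\avg f)^3=N(\avg g)^3$; the latter forces $\avg f=\avg g$ (uniqueness of real cube roots), which is nonzero since $f,g\in\cA$, and dividing through gives equality a.e.\ of the power-spectral invariants, hence $\PS_{\cL f}=\PS_{\cL g}$ (resp.\ $\RPS_{\cL_c f}=\RPS_{\cL_c g}$). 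The main obstacle, as noted, is the continuity/propagation step; once the all-ones structure of $\omega_\varphi(0)$ is exploited, the identification of the degenerate limits with the power-spectral invariants is immediate.
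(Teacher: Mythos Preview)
Your proposal is correct and follows essentially the same route as the paper: reduce via Theorem~\ref{thm:explicit-comp} to the scalar invariants, use Paley--Wiener analyticity to pass from a.e.\ equality to pointwise equality and take limits, let $\lambda_1,\lambda_2\to 0$ to obtain $N(\avg f)^3=N(\avg g)^3$ and hence $\avg f=\avg g$, then let $\lambda_2\to 0$ alone to recover $\avg(f)\,I_2^{\lambda_1,h}(f)=\avg(g)\,I_2^{\lambda_1,h}(g)$ and divide through. You are slightly more explicit than the paper about why the limits are legitimate (the propagation of a.e.\ equality via continuity and $0\in\overline{\cS}$), about the uniqueness of real cube roots, and about $\avg f\neq 0$ from $f\in\cA$, but the argument is the same.
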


\begin{proof}
	We only prove the result for the rotational descriptors. In order to prove the one for the generalized descriptors, it will be enough to fix $h=0$ in the following.
	By Theorem~\ref{thm:explicit-comp} it is enough to show that whenever $I^{\lambda_1,\lambda_2,h}_2(f)= I^{\lambda_1,\lambda_2,h}_h(g)$ for a.e.\ $\lambda_1,\lambda_2\in\cS$ and any $h\in\bZ_N$, then $I^{\lambda,h}_2(f) = I^{\lambda,h}_1(g)$ for a.e. $\lambda\in\cS$ and any $h\in\bZ_N$.
	We start by observing that by the Paley-Wiener Theorem all these quantities are analytic, since $f$ and $g$ are compactly supported. Moreover,
	\begin{equation}
		\lim_{\lambda_1,\lambda_2\downarrow 0} I_2^{\lambda_1,\lambda_2,h}(f) 
		= N \, \hat f(0) |\hat f(0)|^2
		=  N \, \avg(f)^3,
	\end{equation}
	and the same is true for $g$. Thus, $\avg(f) = \avg(g)$.
	Finally, the result follows observing that
	\begin{equation}
		\lim_{\lambda_2\downarrow0} I_2^{\lambda_1,\lambda_2,h}(f) = \avg(f)\,I^{\lambda_1,h}_2(f). 
		\qed
	\end{equation}
\end{proof}

\section{Experimental results} 
\label{sec:experimental_results}

The goal of this section is to evaluate the performance of the invariant Fourier descriptors defined in the previous section on a large image database for object recognition. 
In addition to the generalized power-spectrum (PS) and bispectrum (BS) and the rotational power-spectrum (RPS) and bispectrum (RBS), we also consider the combination of the RPS and BS descriptors.
Indeed, combining these two descriptors seems to be a good compromise between the theoretical result of completeness given by Theorem~\ref{thm:rot-bisp-compl}, which only holds for the RBS, and computational demands, as the results on the COIL-100 database will show.

After showing how to efficiently compute these descriptors and presenting the image data set, we analyze some experimental results. 
In order to estimate the features capabilities, we use a support vector machine (SVM)  \cite{vapnik1998statistical}	 as supervised classification method. 
The recognition performances of the different descriptors regarding invariance to rotation, discrimination capability and robustness against noise are compared.

\subsection{Implementation}

As proved in Theorem~\ref{thm:explicit-comp}, the equality of the Fourier descriptors we introduced does not depend on the choice of the mother wavelet $\Psi$. Accordingly, in our implementation we only computed the quantities introduced in Theorem~\ref{thm:explicit-comp}, whose complexity is reduced to the efficient computation of the vector $\omega_f (\lambda )$, for a given $\lambda\in\cS$. 
We recall that this vector is obtained by evaluating the Fourier transform of $f$ on the orbit of $\lambda$ under the action of discrete rotations $R_{-k}$ for $k \in \bZ_N$.

Let us remark that, although in our implementation we chose this approach, in principle fixing a specific mother wavelet could be useful to appropriately weight descriptors depending on the associated frequencies.
Indeed, preliminary tests with a Gabor mother wavelet (which can be easily shown to be in $\cR$) showed slightly better results at a bigger computational cost.

%

For the implementation we chose to consider $N = 6$ and to work with images composed of hexagonal pixels. 
There are two reasons for this choice:
\begin{itemize}
	\item It is well-known that retinal cells are distributed in a hexagonal grid, and thus it is reasonable to assume that cortical activations reflect this fact.
	\item Hexagonal grids are invariant under the action of $\bZ_6$ and discretized translations, which is the most we can get in the line of the invariance w.r.t.\ $SE(2,6)$. Indeed, apart from the hexagonal lattice, the only other lattices on $\bR^2$ which are invariant by some $\bZ_N$ and appropriate discrete translations are obtained with $N=2,3,4$.
\end{itemize}
The different steps of computation of the descriptors\footnote{MATLAB sample code for the implementation of the rotational bispectral invariants can be found at \url{https://nbviewer.jupyter.org/github/dprn/bispectral-invariant-svm/blob/master/Invariant_computation_matlab.ipynb}} are described in Figure~\ref{fig:schema} and given as follow:
\begin{enumerate}
	\item The input image is converted to grayscale mode, the Fourier transform is computed via FFT, and the zero-frequency component is shifted to the center of the spectrum (Fig~\ref{fig:schema}. S1).
	\item For cost computational reasons and since we are dealing with natural images, for which the relevant frequencies are the low ones, we extract a grid of $16\times16$ pixels around the origin (Fig \ref{fig:schema}. S2).
	\item The invariants of Theorem~\ref{thm:explicit-comp} are computed from the shifted Fourier transform values, on all frequencies in an hexagonal grid inside this $16\times 16$ pixels square.
	A bilinear interpolation is applied to obtain the correct values of $\omega _f (\lambda )$ (Fig \ref{fig:schema}. S3, S4, S5, S6).
	The final dimension of the feature-vector is given in Table~\ref{tab:feat-vec}. 
\end{enumerate}

\begin{table}
	\centering
	\caption{Dimension of the feature vectors for the Fourier descriptors under consideration}
	\label{tab:feat-vec}
	\begin{tabular}{|c|c|}
	\hline
	Descr. &  Dim. \\
	\hline
	\hline 
	PS  	&   136  	\\ \hline
	BS  	& 	717 	\\ \hline
	RPS  	& 	816 	\\ \hline
	RBS  	& 	4417 	\\ \hline
	RPS + BS  	& 	1533 	\\ \hline
	\end{tabular}
\end{table}

\begin{figure*}[ht]
\centering
  \includegraphics[width=1\textwidth]{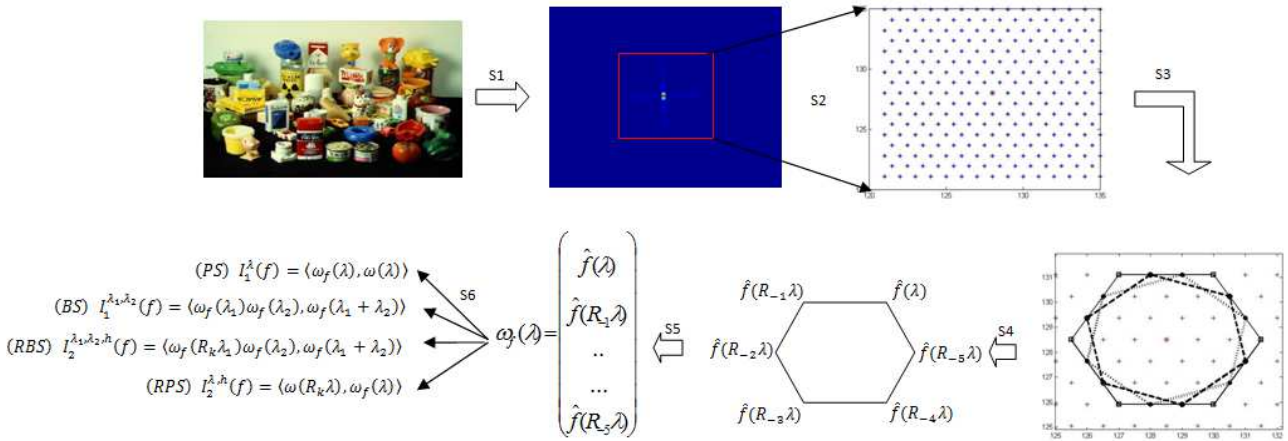}
\caption{Steps of computing the invariant descriptors. (S1) computation of the shifted $FFT$ of the the image $f$, (S2) generation of the hexagonal grid, (S3) extraction of different hexagons, (S4) evaluation of the $FFT$ of $f$ on each extracted hexagon, (S5) generation of the vector $\omega _f (\lambda )$ and (S6) computation of the four invariants }
\label{fig:schema}       
\end{figure*}

\subsection{Test protocol}

We use the Fourier descriptors to feed an SVM classifier, via the MATLAB Statistics and Machine Learning Toolbox, applying it on a database of 7200 objects extracted from the Columbia Object Image Library (COIL-100)
and a database of 400 faces extracted from ORL
face database.
Finally, we compare the results obtained with those obtained using traditional descriptors.

The result of the training step consists of the set of support vectors determined by the SVM based method.
During the decision step, the classifier computes the Fourier descriptors and the model determined during the training step is used to perform the SVM decision. 
The output is the image class.

For COIL-100 database, two cases are studied: a case without noise and another with noise. In the first one, tests have been performed using 75\% of the COIL-100 database images for training and 25\% for testing.In the second one, we have used a learning data-set composed of all the 7200 images (100 objects with 72 views) without noise and a testing data-set composed of 15 randomly selected views per object to which an additive Gaussian noise with $S_d$ of 5, 10 and 20 was added. (See Fig. \ref{fig:noise}). 

We evaluate separately the recognition rate obtained using the four previous invariant descriptors and the combination of the RPS \& BS invariants to test their complementarity.
Then, we compare their performance with the Hu's moments (HM), the Zernike's moments (ZM), the Fourier-Mellin transform (FM), described in Appendix~\ref{sec:moment}, and the local SIFT and HOG descriptors \citep{dalal2005histograms} whose performance under the same conditions has been tested in \citep{Choksuriwong2008}, 

Since we use the RBF kernel in the SVM classification process, this depends on the kernel size $\sigma$. 
The results presented here are obtained by choosing empirically the value $\sigma _{opt}$ that provided maximum recognition rate.

\subsection{Experiments}
The performances of the different invariant descriptors are analyzed with respect to the recognition rate given a learning set. Hence, for a given ratio, the learning and testing sets have been built by splitting randomly all examples. Then, due to randomness of this procedure, multiple trials have been performed with different random draws of the learning and testing set. 
In the case of an added noise, since as mentioned before the learning set is comprised of all images, this procedure is applied only to the testing set.

The parameters of our experiments are the following:

\begin{enumerate}
\item The learning set $c_i$ corresponding to the values of an invariant descriptor computed on an image from the database;
\item The classes $\hat c_i  \in \left\{ {1,100} \right\}$ corresponding to the object class.
\item Algorithm performance: the efficiency is given \\through a percentage of the well recognized objects composing the testing set.
\item Number of random trials: fixed to 5.
\item Kernel K: a Gaussian kernel of bandwidth $\sigma$ is chosen
\begin{equation}
K(x,y) = e^{\frac{{ - \left\| {x - y} \right\|^2 }}{{2\sigma ^2 }}} \\ 
\end{equation}
$x$ and $y$ correspond to the descriptors vectors of objects.
\end{enumerate}

For solving a multi-class problem, the two most popular approaches are the one-against-all (OAA) method and the one-against-one (OAO) method \citep{milgram:inria-00103955}. 
For our purpose, we chose an OAO SVM because it is substantially faster to train and seems preferable for problems with a very large number of classes. 

\subsubsection{COIL-100 databases}
The Columbia Object Image Library (COIL-100, Fig. \ref{fig:coil}) is a database of color images of 100 different objects, where 72 images of each object were taken at pose intervals of $5^\circ$.

\subsubsection*{\textbf{Classification performance}}
         
\begin{figure}[ht]
\centering
  \includegraphics[width=0.48\textwidth]{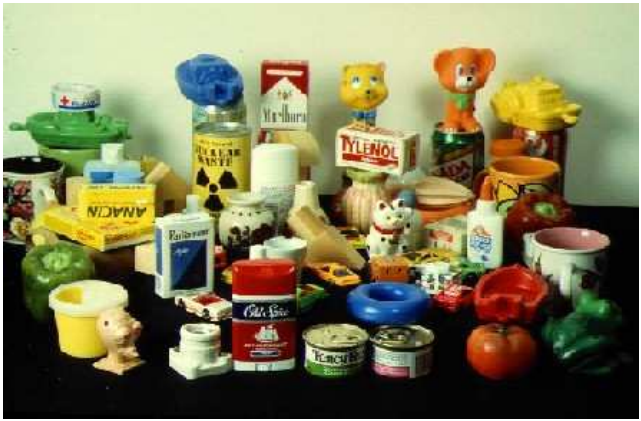}
\caption{Sample objects of COIL-100 database}
\label{fig:coil}       
\end{figure}

Table~\ref{tab:coilTab} presents results obtained testing our object recognition method with the COIL-100 database.
The best results were achieved using the local SIFT descriptor. The RBS comes in the second place and the local HOG features come third. Indeed it has been demonstrated in the literature, these local methods currently give the best results. However, if noise is added on the image, the use of global approach is better than the use of local ones. The main reason is that the key-points detector used in the local method produce in these cases many key-points that are nor relevant for object recognition. This will be shown in the next subsection.

\begin{table}[ht]
\centering
\caption{Recognition rate for each descriptor using the COIL-100 database. The test results for ZM, HM, FM, and SIFT are taken from \cite{Choksuriwong2008}.}
\label{tab:coilTab}       
\begin{tabular}{|c|c|}
\hline
Descriptors & Recognition rates \\
\hline
\hline
RBS & \textbf{95.5\%}\\
BS & 88\%\\
PS & 84.3\%\\
RPS &89.8\%\\
RPS+BS & \textbf{92.8\%}\\
ZM & 91.9\%\\
HM & 80.2\%\\
FM & 89.6\% \\
HOG & \textbf{95.3\%} \\
SIFT & \textbf{100\%} \\\hline
\end{tabular}
\end{table}

\begin{figure}[ht]
\centering
  \includegraphics[width=0.5\textwidth]{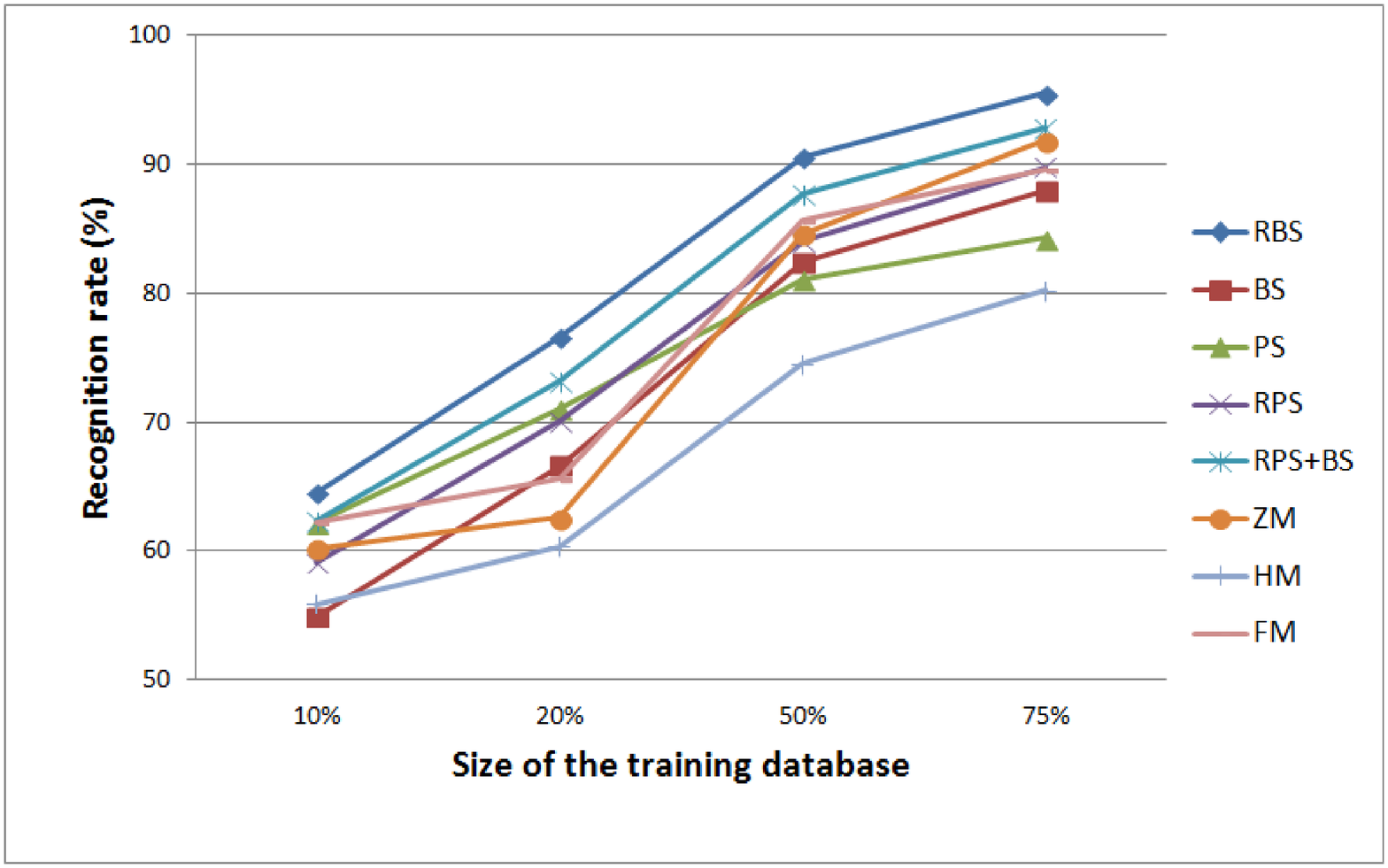}
\caption{Classification rate for different size of the training database. The test results for ZM, HM, FM, and SIFT are taken from \cite{Choksuriwong2008}.}
\label{fig:graph}       
\end{figure}

In Figure \ref{fig:graph} we present the recognition rate as a function of the size of the training set. 
As expect, this is an increasing function and we remark that the RBS and the combination of the RPS and the BS give better results than the other global invariant descriptors.

\subsubsection*{\textbf{Robustness against noise}}

Also in this case, test results for ZM, HM, FM, and SIFT are taken from \cite{Choksuriwong2008}.

\begin{figure}[h]
\centering
  \includegraphics[width=0.48\textwidth]{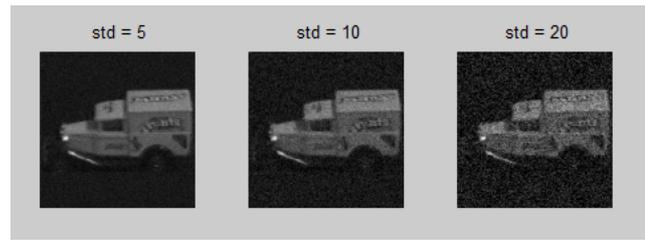}
\caption{Sample of COIL-100 noisy object}
\label{fig:noise}       
\end{figure}

Results presented in Table~\ref{tab:feat-noise} show that noise has little influence on classification performance when we use a global descriptor such as RBS, BS, the combination of BS \& RPS, ZM, HM and FM. It has however a sensible effect on the SIFT local descriptor, and a big one on the HOG local descriptor. 

\begin{table*}[t]
	\centering
	\caption{Classification rate on COIL-100 noisy database. The test results for ZM, HM, FM, and SIFT are taken from \cite{Choksuriwong2008}.}
	\label{tab:feat-noise}
	\begin{tabular}{|c|c|c|c|c|c|c|c|c|c|c|}
	\hline
	$S_d$ & RBS & BS & PS & RPS & RPS+BS & ZM & HM & FM & SIFT & HOG\\
	\hline
	\hline 
	5 & \textbf{100\%} & \textbf{100\%} & 71.5\% & 99.8\% & \textbf{100\%} & \textbf{100\%} & 95.2\% & 98.6\% & 89.27\% & 4\%  	\\ 
	10 & \textbf{100\%} & \textbf{100\%} & 71.2\% & 99.8\% & \textbf{100\%} & \textbf{100\%} & 95.2\% & 95.2\% & 88.89\% & 1.2\%  	\\ 
	20 & \textbf{100\%} & \textbf{100\%} & 67.8\% & 99.8\% & \textbf{100\%} & \textbf{100\%} & 91.4\% & 90.2\% & 85.46\% & 1\%  	\\ \hline

	\end{tabular}
\end{table*}

\subsubsection{The ORL database}

The Cambridge University ORL face database (Fig. \ref{fig:orl}) is composed of 400 grey level images of ten different patterns for each of 40 persons. 
The variations of the images are across time, size, pose and facial expression (open/closed eyes, smiling/not smiling), and facial details (glasses/no glasses). 

\begin{figure}[ht]
\centering
  \includegraphics[width=0.48\textwidth]{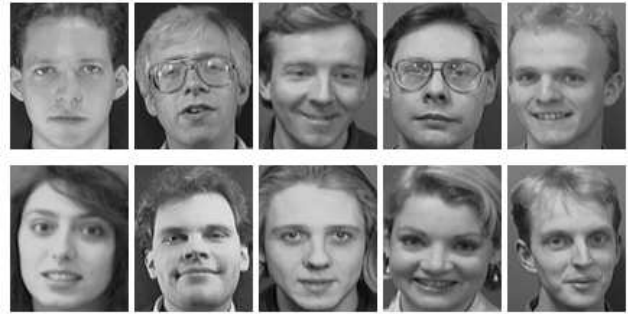}
\caption{Face samples from the ORL database}
\label{fig:orl}       
\end{figure}

In the literature, the protocol used for training and testing is different from one paper to another. In \\\citep{341300}, a hidden Markov model (HMM) based approach is used, and the best model resulted in recognition rate of 95\%, with high computational cost. In \citep{hjelmas2001}, Hjelmas reached a 85\% recognition rate using the ORL database and feature vector consisting of Gabor coefficients.

We perform experiments on the ORL database using the RBS, BS, PS, RPS, ZM, HU, FM, and the combination of the RPS \& BS descriptors. 
Since the local descriptors SIFT and HOG obtained, predictably, almost perfect scores, we do not present them.
The results are shown in Table \ref{tab:orlTab}, where we clearly see that the RBS invariant descriptor gives the best recognition rate $c = 89.8\%$, faring far better than before w.r.t.\ the combination of RPS and BS descriptors.

\begin{table}[ht]
\centering
\caption{Recognition rate for each descriptor using the ORL database}
\label{tab:orlTab}       
\begin{tabular}{|c|c|}
\hline
Descriptors & Recognition rates\\
\hline
\hline
RBS & \textbf{89.8\%}\\
BS & 67.9\%\\
PS & 49.2\%\\
RPS & 76.9\%\\
RPS+BS & 79.8\%\\
ZM & 75\%\\
HM & 43.5\%\\
FM & 47.6\%\\
\hline
\end{tabular}
\end{table}


\section{Conclusion and perspectives}
\label{sec:conclusion}

In this paper we presented four Fourier descriptors over the semidiscrete roto-translation group $SE(2,N)$. Then, we proved that the generalized power-spectrum ($\PS$) and bispectrum ($\BS$) -- and thus the rotational power-spectrum ($\RPS$) and bispectrum ($\RBS$) -- are weakly complete, in the sense that they allow to distinguish over an open and dense set of compactly supported functions $\varphi\in L^2(SE(2,N))$ up to the $SE(2,N)$ action. This generalizes a result of \cite{Smach2008Generalized}. 
We then considered a framework for the application of these Fourier descriptors to roto-translation invariant object recognition, inspired by some neurophysiological facts on the human primary visual cortex. 
In this framework, we showed that the rotational bispectrum is indeed a weakly complete roto-translation invariant for planar images.
Moreover, although the proposed Fourier descriptors are given in terms of complex mathematical objects, we showed that they can be implemented in a straightforward way as linear combinations of the values of the 2D Fourier transform of the image.

In the second part of the paper, we proposed an evaluation of the performances of these Fourier descriptors  in object recognition and we presented the results obtained on different databases: the COIL-100 database, composed of several objects undergoing 3D rotation and scales changes, and the ORL-database, on which different human faces are subject to several kind of variations. 
For both these databases, the global Fourier descriptors introduced in this paper are the most efficient global descriptors tested, equalled only, for noisy images, by the Zernike Moments.
Although for unperturbed images the local SIFT descriptor gives better recognition rate, the addition of noise  leads to the global descriptors outperforming the local ones.
These results thus show the rotational bispectrum ($\RBS$) to be a very good Fourier descriptor for object recognition, consistently with the theoretical weak completeness result. When the dimension of the feature vector is an issue, the $\RBS$ can be replaced by a combination of the generalized bispectrum ($\BS$) and the rotational power-spectrum ($\RPS$), which yields slightly worse results with a feature vector of length almost one third.

An extension of the object recognition method presented in this paper to an AdaBoost framework for the problem of object detection is currently undergoing. 

\appendix

\section{Moment invariants and Fourier-Mellin transform}
\label{sec:moment}

In this section, following \cite{Choksuriwong2008}, we review the two most used classes of moment invariants, Hu and Zernike, and Fourier-Mellin descriptors, that we use as a comparison for our generalized Fourier descriptors.

Moment invariants were first introduced to the pattern recognition and image processing community in 1962 by Hu \citep{Hu1962}, with the introduction of the seven Hu moments which are invariants under translation, rotation and scaling. These are derived from a scaling and translation invariant modification of the standard moments of an image $I:\mathbb R^2\to \mathbb R$. Namely, 
\begin{equation}
	v_{p,q}  = \frac{{u_{p,q} }}{{u_{0,0}^{(1 + \frac{{p + q}}{2})} }},
\end{equation}
where
\begin{equation}
	u_{p,q}  = \int\limits_{\bR^2 } {(x-x_0)^p (y-y_0)^q I (x,y)dxdy},
\end{equation}
and $x_0  = \frac{{m_{1,0} }}{{m_{0,0} }}$ and $y_0  = \frac{{m_{0,1} }}{{m_{0,0} }}$ are the coordinates of the barycenter computed via the standard $(p+q)$-th order moments of $I$:
\begin{equation}
	m_{p,q}  = \int_{\bR^2 } {x^p y^q I(x,y)dxdy}.
\end{equation}

Another important class of moments are the Zernike ones, introduced in \citep{chong2003translation} and computed via orthogonal Zernike polynomials. The Zernike moment of order $(m,n)$ is:
\begin{equation}
Z_{mn}  = \frac{{m + 1}}{n}\sum\limits_x {\sum\limits_y {I(x,y)\left[ {V_{mn} (x,y)} \right]} } 
\end{equation}
where $x^2  + y^2  < 1$ and $V_{mn} (x,y)$ are the Zernike polynomials defined in polar coordinates as $V_{mn} (r,\theta ) = R_{mn} (r)e^{jn\theta }$, where
\begin{equation}
R_{mn} (r) = \sum\limits_{s = 0}^{\frac{{m - \left| n \right|}}{2}} {\frac{{( - 1)^s (m - s)!r^{m - 2s} }}{{s!(\frac{{m + \left| n \right|}}{2} - s)!(\frac{{m - \left| n \right|}}{2} - s)!}}} 
\end{equation}
These moments present several advantages. Indeed, beside a rotation and translation invariance they have nice orthogonality properties and are considered to be robust against image noise. In particular, the orthogonality property helps in achieving a near zero value of redundancy measure in a set of moments functions \citep{Chong2003}. 

Finally, strictly related to Fourier descriptors are the descriptors obtained via Fourier-Mellin transform (FMT), presented in \citep{derrode2001robust}. The FMT of an image $I$, that we assume to be given in polar coordinates, is defined as:
\begin{equation}
M_I (u,v) = \frac{1}{{2\pi }} \int_0^{2\pi}\int_0^\infty {I(r,\theta )r^{ - iv} e^{ - iu\theta } \frac{{dr}}{r}d\theta } .
\end{equation} 
Following \citep{derrode2001robust}, we will indeed compute the analytical Fourier-Mellin transform (AFMT). That is, we replace $I$ in the above definition with its regularized version $I_\sigma  (r,\theta ) = r^\sigma  I(r,\theta )$, where $\sigma>0$. Finally, each feature $M_{I_\sigma } (u, v)$ is modified in order to compensate for the rotation, translation and size changes of the object.

\section{Auxiliary lemmata for the proof of Theorem~\ref{thm:complete-bisp}}
\label{sec:proof-main}

\begin{lemma}
	\label{lem:open-dense}
	The set  $\cG$ introduced in Theorem~\ref{thm:complete-bisp} is open and dense in $L^2(K\times\bZ_N)$.
\end{lemma}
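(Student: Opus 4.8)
The goal is to show that $\cG$, the set of $\varphi \in L^2(\bZ_N \times K)$ supported in $\bZ_N \times K$ whose Fourier transform $\hat\varphi(T^\lambda)$ is invertible for an open and dense set of $\lambda$'s, is itself open and dense in $L^2(\bZ_N \times K)$. The plan is to exploit the explicit matrix form \eqref{eq:non-comm-ft}, namely $\hat\varphi(T^\lambda)_{i,j} = \cF(\varphi(\cdot, i-j))(R_{-j}\lambda)$, which shows that each entry is a component of the 2D Fourier transform of the ``slices'' $\varphi(\cdot, m)$, $m \in \bZ_N$. The key observation is that for $\varphi$ compactly supported in $K$, the map $\lambda \mapsto \det \hat\varphi(T^\lambda)$ is real-analytic on $\bR^2$ (by the Paley–Wiener theorem each slice has an entire Fourier transform, and the determinant is a polynomial in the entries). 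An analytic function on $\bR^2$ is either identically zero or nonzero on an open and dense set; hence $\varphi \in \cG$ if and only if $\det \hat\varphi(T^\lambda) \not\equiv 0$.

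For density: given an arbitrary $\psi \in L^2(\bZ_N \times K)$ and $\varepsilon > 0$, I would produce $\varphi \in \cG$ with $\|\varphi - \psi\| < \varepsilon$. It suffices to exhibit a single $\varphi_0$ compactly supported in the interior of $K$ (if $K$ has empty interior the statement is vacuous up to the obvious modification) with $\det\hat{\varphi_0}(T^{\lambda_0}) \neq 0$ at one point $\lambda_0$; then, for generic small $t$, the perturbation $\psi + t\varphi_0$ still has non-vanishing determinant at $\lambda_0$, hence lies in $\cG$, and is within $\varepsilon$ of $\psi$ for $t$ small. To build such a $\varphi_0$, one can pick $N$ bump functions supported in $K$ whose Fourier transforms at $\lambda_0$ — together with their rotated copies appearing in the off-diagonal entries — are arranged so that $\hat{\varphi_0}(T^{\lambda_0})$ is, say, close to a nonzero multiple of the identity; for instance, take $\varphi_0(\cdot, m)$ concentrated so that $\varphi_0(\cdot, 0)$ dominates and the slices $\varphi_0(\cdot, m)$, $m \neq 0$, are negligible, making $\hat{\varphi_0}(T^{\lambda_0})$ nearly diagonal with nonzero diagonal entry $\cF(\varphi_0(\cdot,0))(R_{-j}\lambda_0)$. (If $K$ is too small to allow slices whose Fourier transform is nonzero at a prescribed $\lambda_0$, one instead chooses $\lambda_0$ after fixing $\varphi_0$, using that $\cF(\varphi_0(\cdot,0))$ is entire and nonzero somewhere.)

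For openness: suppose $\varphi \in \cG$, so $\det\hat\varphi(T^{\lambda_0}) \neq 0$ for some $\lambda_0$. The entries of $\hat\varphi(T^{\lambda_0})$ depend continuously — indeed boundedly — on $\varphi$ in the $L^2(\bZ_N\times K)$ norm, since $\hat\varphi(T^{\lambda_0})_{i,j}=\int_K \varphi(x,i-j)\,e^{-i\langle R_{-j}\lambda_0,x\rangle}\,dx$ and $K$ has finite measure, so this is a bounded linear functional of $\varphi$. Hence $\det\hat\varphi(T^{\lambda_0})$ is continuous in $\varphi$, so for $\psi$ close to $\varphi$ in $L^2$ we still have $\det\hat\psi(T^{\lambda_0}) \neq 0$, i.e.\ $\psi \in \cG$. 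The main obstacle I anticipate is purely the bookkeeping in the density step — constructing the explicit perturbation $\varphi_0$ and verifying the determinant is nonzero requires care with the off-diagonal structure coming from \eqref{eq:non-comm-ft}, and one must handle the degenerate case where $K$ is small; the openness and the analyticity dichotomy are routine.
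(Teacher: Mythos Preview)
Your proposal is correct and follows essentially the same route as the paper: the Paley--Wiener analyticity dichotomy reducing membership in $\cG$ to invertibility at a single $\lambda_0$, the explicit witness with only the $k=0$ slice nonzero, and the one-parameter perturbation $\psi + t\varphi_0$ for density all match the paper's argument. Your openness step, via continuity of the point-evaluation functional $\varphi\mapsto \hat\varphi(T^{\lambda_0})_{i,j}$ on $L^2(\bZ_N\times K)$, is in fact more direct than the paper's convergence-in-measure argument, but the overall strategy is the same.
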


\begin{proof}
	We start by showing that $\cG\neq\varnothing$. To this aim, it suffices to consider $\varphi$ such that $\varphi(\cdot,k)\equiv0$ for all $k\in \bZ_N\setminus\{0\}$ and $\varphi(\cdot,0)\neq 0$ such that $\supp\,\cF(\varphi(\cdot,0))=\bR^2$. By \eqref{eq:non-comm-ft}, we then have $\varphi\in\cG$, since
	\begin{equation}
		\det\hat\varphi(T^\lambda) = \prod_{k\in\bZ_N}\cF(\varphi(\cdot,0))(R_{-k}\lambda)\neq 0 \qquad\forall\lambda\in\cS.
	\end{equation}

	For any $\varphi\in \cG$ and $k\in \bZ_N$, the Paley-Wiener Theorem implies that $\cF(\varphi(\cdot, k))$ is analytic. In particular, by \eqref{eq:non-comm-ft}, $\lambda\mapsto \det \hat\varphi(T^\lambda)$ is analytic. Thus, $\varphi \in \cG$ if and only if $\varphi(T^{\lambda_0})$ is invertible for some $\lambda_0\in\cS$.
	
	We claim that the set $\cG$ is dense. Indeed, let $\varphi\notin\cG$ and fix some $\eta\in \cG$ and $\lambda_0\in\cS$ such that $\hat\eta(T^{\lambda_0})$ is invertible. By analyticity of $\varepsilon \mapsto \det(\hat\varphi(T^{\lambda_0})+\varepsilon\hat\eta(T^{\lambda_0}))$ follows that $\varphi+\varepsilon \eta\in\cG$ for sufficiently small $\varepsilon>0$, which entails that $\varphi\in \bar\cG$, proving the claim.
	
	Let us prove that $\cG$ is open in $L^2(K\times\bZ_N)$. To this aim, fix $\varphi\in\cG$ and $\varphi_n\rightarrow\varphi$ in $L^2(K\times\bZ_N)$. This implies that $\hat\varphi_n\rightarrow\hat\varphi$ in $L^2(\widehat{SE(2,N)})$, and in particular that $\hat\varphi_n\rightarrow\hat\varphi$ in measure. By definition of convergence in measure, this implies that for sufficiently big $n$ it has to hold $\det\hat\varphi_n(T^{\lambda_0})\neq0$. Hence $\varphi_n\in\cG$ for $n$ sufficiently big and $\cG$ is open.
	\qed
\end{proof}

Before diving into the proofs of the other auxiliary lemmata, we make the following observation.
Let $\lambda_1,\lambda_2\in I$ be such that $\lambda_1+R_k\lambda_2\in I$ for all $k\in \bZ_N$.
Applying the Induction-Reduction theorem \eqref{eq:ind-red} to \eqref{eq:tensor} yields
\begin{multline}\label{eq:ind-red-cont-U-moore}
		A \circ U(T^{\lambda_1})\otimes U(T^{\lambda_2}) \circ A^{-1} \\
		= \bigoplus_{k\in\bZ_N} \varphi(T^{\lambda_1+R_k\lambda_2})^{-1} \eta(T^{\lambda_1+R_k\lambda_2})
		= \bigoplus_{k\in\bZ_N} U(T^{\lambda_1+R_k\lambda_2}).
\end{multline}

\begin{lemma}
	\label{lem:cont}
	The function $\lambda\mapsto U(T^\lambda)$ is continuous on $I$.
\end{lemma}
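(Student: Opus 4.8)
The plan is to prove continuity of $\lambda\mapsto U(T^\lambda)$ on $I$ by writing $U(T^\lambda)=\hat\varphi(T^\lambda)^{-1}\hat\eta(T^\lambda)$ and showing that each factor is continuous in $\lambda$ on $I$, then that matrix inversion is continuous on the (open) locus where the determinant does not vanish. First I would recall from \eqref{eq:non-comm-ft} that the entries of $\hat\varphi(T^\lambda)$ are given by $\hat\varphi(T^\lambda)_{i,j}=\cF(\varphi(\cdot,i-j))(R_{-j}\lambda)$, and since $\varphi$ is supported in $\bZ_N\times K$ with $K$ compact, the Paley--Wiener theorem tells us that each $\cF(\varphi(\cdot,k))$ is (the restriction to $\bR^2$ of) an entire function, hence in particular continuous — indeed real-analytic — in $\lambda$. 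Composing with the linear maps $\lambda\mapsto R_{-j}\lambda$ preserves continuity, so $\lambda\mapsto\hat\varphi(T^\lambda)$ and $\lambda\mapsto\hat\eta(T^\lambda)$ are continuous as maps into $N\times N$ complex matrices. The same applies to $\lambda\mapsto\det\hat\varphi(T^\lambda)$, which is therefore continuous (again, analytic).

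Next I would invoke the definition of $I$: by construction $I$ is precisely the set of $\lambda\in\cS$ (equivalently, by analyticity, an open and dense subset of $\bR^2$) on which $\hat\varphi(T^\lambda)$ — equivalently $\hat\eta(T^\lambda)$ — is invertible, i.e.\ where $\det\hat\varphi(T^\lambda)\neq0$. On this set the inverse matrix is given by Cramer's rule, $\hat\varphi(T^\lambda)^{-1}=\frac{1}{\det\hat\varphi(T^\lambda)}\,\mathrm{adj}\,\hat\varphi(T^\lambda)$, where the adjugate has polynomial entries in the entries of $\hat\varphi(T^\lambda)$ and the scalar $\det\hat\varphi(T^\lambda)$ is continuous and nonvanishing on $I$. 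Hence $\lambda\mapsto\hat\varphi(T^\lambda)^{-1}$ is continuous on $I$, and therefore so is the product $\lambda\mapsto U(T^\lambda)=\hat\varphi(T^\lambda)^{-1}\hat\eta(T^\lambda)$, being a composition of continuous operations (matrix multiplication is continuous on $\bC^{N\times N}\times\bC^{N\times N}$). This completes the argument.

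There is essentially no serious obstacle here: the only point requiring a moment's care is that $I$ was defined via the equality of the generalized bispectrum (as the common locus of non-invertibility of $\hat\varphi(T^\lambda)$ and $\hat\eta(T^\lambda)$), so one should note explicitly that $I$ is open — which follows from continuity of $\lambda\mapsto\det\hat\varphi(T^\lambda)$ — before speaking of continuity of a function defined on it. One should also keep in mind that $\cS$ is a half-open slice in polar coordinates; since $U(T^\lambda)$ is really a function on the orbit space and all the $\hat\varphi(T^\lambda)$ extend analytically to $\lambda\in\bR^2$, it is cleanest to regard $U$ as defined on the open dense subset $I\subset\bR^2$ where $\det\hat\varphi(T^\lambda)\neq0$, which is exactly what Lemma~\ref{lem:extension} will need. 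Everything else is the standard fact that finite-dimensional matrix inversion is a continuous (indeed smooth) operation away from the vanishing locus of the determinant.
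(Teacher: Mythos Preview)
Your argument is correct and, in this compactly-supported setting, considerably more direct than the paper's own proof. The paper does \emph{not} invoke Paley--Wiener here; instead it exploits the tensor identity \eqref{eq:tensor} via the Induction--Reduction decomposition to obtain an entrywise multiplicative relation $U(T^{\lambda_1})_{i,j}\,U(T^{\lambda_2})^*_{i,j}=U(T^{\lambda_1+\lambda_2})_{i,j}$, and then writes each entry of $U(T^{\lambda_1})$ as a ratio of integrals of $U$ over a small open set, from which continuity follows by continuity of the translated integral. Your route bypasses all of this: since $\varphi,\eta\in L^2(\bZ_N\times K)$ with $K$ compact, the entries of $\hat\varphi(T^\lambda)$ and $\hat\eta(T^\lambda)$ are real-analytic by Paley--Wiener (a fact the paper itself uses in Lemma~\ref{lem:open-dense}), so Cramer's rule gives $U(T^\lambda)=\hat\varphi(T^\lambda)^{-1}\hat\eta(T^\lambda)$ real-analytic on $I$, which is strictly stronger than mere continuity.

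The trade-off is generality. The paper's argument only needs $U$ to be locally integrable and to satisfy the tensor relation; it does not rely on any a~priori regularity of $\hat\varphi$ or $\hat\eta$, and so it survives in the ``more general setting'' the authors allude to before Theorem~\ref{thm:complete-bisp} (e.g.\ functions that are not compactly supported, or abstract locally compact groups where the Fourier transform is only defined almost everywhere). Your argument is tailored to the compact-support hypothesis of Theorem~\ref{thm:complete-bisp} and would not transfer without it. For the statement as written, however, your proof is perfectly adequate and arguably preferable.
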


\begin{proof}	
	Fix $\lambda_0\in I$ and an open set $V\subset \bR^2$ such that 
	\begin{equation}
		\int_V U(T^{\lambda_2})^*_{i,j}\,d\lambda_2 >0 \qquad\forall i,j\in\bZ_N.
	\end{equation}
	This is possible since $U\not\equiv 0$. Since the set $I$ is open dense, up to reducing $V$ we can assume that there exists a neighborhood $W$ of $\lambda_0$ such that $V+\lambda\subset I$ for any $\lambda\in W$. 
	Then, \eqref{eq:ind-red-cont-U-moore} holds for $\lambda_1\in W$ and $\lambda_2\in V$.
	Explicitly computing the $0,0$ block of \eqref{eq:ind-red-cont-U-moore}, we have
	\begin{equation}\label{eq:cont}
		U(T^{\lambda_1})_{i,j} U(T^{\lambda_2})_{i,j}^* = U(T^{\lambda_1+\lambda_2})_{i,j} \qquad \forall i,j \in\bZ_N.
	\end{equation}
	Then, integrating it over $V$ w.r.t.\ $\lambda_2$ yields
	\begin{equation}
		U(T^{\lambda_1})_{i,j} = \frac{\int_{V+\lambda} U(T^{\lambda_2})_{i,j}\,d\lambda_2}{\int_{V} U(T^{\lambda_2})^*_{i,j}\,d\lambda_2} 
		\qquad\forall \lambda_1\in W, \, \forall i,j\in\bZ_N
	\end{equation}
	Since the function on the r.h.s.\ is clearly continuous on $W$ this proves the continuity at $\lambda_0$ of $\lambda\mapsto U(T^\lambda)$, completing the proof.
	\qed
\end{proof}

\begin{lemma}
	\label{lem:extension}
	The function $\lambda\mapsto U(T^\lambda)$ can be extended to a continuous function on $\bR^2$ for which \eqref{eq:tensor} is still true.
\end{lemma}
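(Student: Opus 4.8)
The plan is to show that $U(T^\lambda)$, which is a priori only defined on the open dense set $I$ where $\hat\varphi(T^\lambda)$ is invertible, extends continuously across the (closed, nowhere dense) exceptional set $\bR^2\setminus I$, and that the functional equation \eqref{eq:tensor} persists under this extension. The key structural input is the multiplicativity relation \eqref{eq:cont}, namely $U(T^{\lambda_1})_{i,j}\,U(T^{\lambda_2})^*_{i,j} = U(T^{\lambda_1+\lambda_2})_{i,j}$, valid whenever $\lambda_1,\lambda_2$ and $\lambda_1+\lambda_2$ all lie in $I$. This was already the engine of Lemma~\ref{lem:cont}, and it is what lets us ``transport'' the value of $U$ from good points to bad ones.

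First I would fix a point $\mu\in\bR^2\setminus I$ and choose, exactly as in the proof of Lemma~\ref{lem:cont}, an open set $V\subset\bR^2$ with $\int_V U(T^{\lambda})^*_{i,j}\,d\lambda\neq 0$ for all $i,j$ (possible since $U\not\equiv 0$ on $I$ and each entry is continuous there), shrunk so that $V\subset I$ and $\mu+V\subset I$ as well; the latter is possible because $I$ is open and dense, so its complement is nowhere dense and a generic small translate of any fixed small ball avoids it. Then for $\lambda_1$ in a neighborhood $W$ of $\mu$ and $\lambda_2\in V$, both $\lambda_2\in I$ and $\lambda_1+\lambda_2\in I$, so \eqref{eq:cont} gives
\begin{equation}
	U(T^{\lambda_1})_{i,j} = \frac{\int_{\lambda_1+V} U(T^{\lambda_2})_{i,j}\,d\lambda_2}{\int_{V} U(T^{\lambda_2})^*_{i,j}\,d\lambda_2},
\end{equation}
and the right-hand side is a continuous function of $\lambda_1$ on all of $W$, including at $\mu$. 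This defines the extension, call it $\tilde U(T^\mu)$; one checks it is independent of the auxiliary choice of $V$ by comparing two such formulas via \eqref{eq:cont} again on the overlap, and that it agrees with $U$ on $I$. A partition-of-unity / covering argument then patches these local extensions into a single continuous $\tilde U$ on $\bR^2$.

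It remains to propagate \eqref{eq:tensor}, i.e.\ $\hat\varphi(T^{\lambda_1}\otimes T^{\lambda_2})\circ U(T^{\lambda_1})\otimes U(T^{\lambda_2}) = \hat\eta(T^{\lambda_1}\otimes T^{\lambda_2})$, from $(\lambda_1,\lambda_2)\in I\times I$ to all of $\bR^2\times\bR^2$. Both sides are matrix-valued functions that are continuous in $(\lambda_1,\lambda_2)$ on all of $\bR^2\times\bR^2$: the Fourier coefficients $\hat\varphi(T^\mu)$, $\hat\eta(T^\mu)$ are entire in $\mu$ by Paley--Wiener (since $\varphi,\eta$ have compact support), the tensor-product Fourier coefficients likewise, and we have just shown $\tilde U$ is continuous. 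They coincide on the dense set $I\times I$, hence everywhere by continuity. This gives \eqref{eq:tensor} for $\tilde U$ on all of $\bR^2$, completing the proof. The main obstacle — and the only place any care is needed — is the geometric/measure-theoretic point that one can choose $V$ with $V\subset I$, $\mu+V\subset I$, \emph{and} the non-vanishing of the integrals $\int_V U(T^\lambda)^*_{i,j}\,d\lambda$ simultaneously; this rests on $\bR^2\setminus I$ being nowhere dense together with the fact that $U$ is not identically zero and its entries vary continuously (indeed analytically, by analyticity of $\det\hat\varphi$ and Cramer's rule) on $I$, so its entries cannot vanish on any open subset of $I$.
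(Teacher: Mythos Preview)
Your approach is different from the paper's and mostly sound, but there is one real gap. You assert that because the entries $U(T^\lambda)_{i,j}$ are analytic on $I$, they ``cannot vanish on any open subset of $I$,'' and hence a single $V$ can be found with $\int_V U(T^\lambda)^*_{i,j}\,d\lambda\neq 0$ for \emph{all} $(i,j)$. This is false: analyticity only rules out vanishing on an open set for an entry that is not identically zero on the relevant connected component, and entries of $U$ \emph{can} be identically zero---indeed, Lemma~\ref{lem:final} eventually shows $U(T^\lambda)=\diag_k\varphi_k(\lambda)\,S^{i_0}$, so all but $N$ of the $N^2$ entries vanish identically. The fix is easy: argue entry by entry. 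If $U_{i,j}\equiv 0$ on $I$, extend it by zero; otherwise choose an entry-dependent $V_{i,j}\subset I\cap(I-\mu)$ on which the integral of $U^*_{i,j}$ is nonzero, and run your averaged-quotient formula for that entry. Uniqueness of continuous extensions from the dense set $I$ guarantees these entrywise extensions assemble into a single well-defined continuous matrix $\tilde U$.

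For comparison, the paper does not re-use the integral formula from Lemma~\ref{lem:cont}. Instead, given $\lambda_0\notin I$, it chooses $\lambda_1,\lambda_2\in I$ with $\lambda_0=\lambda_1+R_{k_0}\lambda_2$ and $\lambda_1+R_k\lambda_2\in I$ for $k\neq k_0$, and \emph{defines} $U(T^{\lambda_0})$ to be the $(k_0,k_0)$-block of $A\circ U(T^{\lambda_1})\otimes U(T^{\lambda_2})\circ A^*$; well-definedness and the persistence of \eqref{eq:tensor} are then read off from \eqref{eq:ind-red-cont-U-moore}. Your route is more elementary---it avoids invoking the Induction-Reduction machinery again for the extension itself---and your density-plus-continuity argument for propagating \eqref{eq:tensor} to all of $\bR^2\times\bR^2$ is cleaner than the paper's block-by-block verification. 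The paper's route, on the other hand, produces the extension at each bad point by a single explicit matrix-level formula, without splitting into cases by entry.
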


\begin{proof}
	Let $\lambda_0\notin I$.
  Since $I$ is an open and dense set, this implies that $\lambda_0$ is in its closure and that we can choose $\lambda_1,\lambda_2\in I$ such that $\lambda_0=\lambda_1+R_{k_0} \lambda_2$ for some $k_0\in\bZ_N$ and $\lambda_1+R_k\lambda_2\in I$ for any $k\neq k_0$.
  We then let
  \begin{equation}
    \label{eq:ind-red-Moore2}
    U(T^{\lambda_0}) \coloneqq \left(A \circ U(T^{\lambda_1})\otimes U(T^{\lambda_2})\circ A^*\right)^{k_0,k_0}.
  \end{equation}

  We now prove that the above definition does not depend on the choice of $\lambda_1$, $\lambda_2$ and $k_0$.
  By openness of $I$, there exists a neighborhood $V$ of $\lambda_2$ entirely contained in $I$.
  Then, up to taking a smaller $V$, it holds that $\lambda_1+R_{k_0}\lambda_2'\in I$ for any $\lambda_2'\in V\setminus\{\lambda_2\}$.
  By \eqref{eq:ind-red-cont-U-moore}, this implies that for any $\mu_1+R_{\ell}\mu_2=\lambda_0$ it holds 
  \begin{multline}
  (A \circ U(T^{\lambda_1})\otimes U(T^{\lambda_2'})\circ A^*)^{k_0,k_0}=\\
  (A \circ U(T^{\mu_1})\otimes U(T^{\mu_2'})\circ A^*)^{\ell,\ell}.	
  \end{multline}
   for $\lambda_2'$ and $\mu_2'$ sufficiently near, but different, to $\lambda_2$ and $\mu_2$, respectively.
  By the continuity of $U$ on $I$, proved in Lemma~\ref{lem:cont}, this implies that this equation has to hold also for $\lambda_2'=\lambda_2$  and $\mu_2'=\mu_2$.
  Hence, \eqref{eq:ind-red-Moore2} does not depend on the choice of $\lambda_1,\lambda_2$ and $k_0$.

  Finally, the fact that $\hat f(T^{\lambda_1}\otimes T^{\lambda_2}) \circ U(T^{\lambda_1})\otimes U( T^{\lambda_2}) = \hat g(T^{\lambda_1}\otimes T^{\lambda_2})$ for any $\lambda_1,\lambda_2$ follows from \eqref{eq:ind-red-Moore2} and \eqref{eq:ind-red-cont-U-moore}.
  \qed
\end{proof}

\begin{lemma}
	\label{lem:final}
	There exists $a\in SE(2,N)$ such that $U(T^\lambda) = T^\lambda(a)$.
\end{lemma}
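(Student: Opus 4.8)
The plan is to determine $U(T^\lambda)$ completely from three pieces of information already at hand: the entrywise multiplicativity coming from the $(0,0)$ block of the Induction--Reduction identity \eqref{eq:ind-red-cont-U-moore}; the unitarity and continuity of $U$ established in the proof of Theorem~\ref{thm:complete-bisp} and in Lemmas~\ref{lem:cont} and~\ref{lem:extension}; and the covariance of the Fourier transform on $SE(2,N)$ under the cyclic shift $S$, which is the only ingredient that genuinely uses that $\hat\varphi$ and $\hat\eta$ are Fourier transforms of functions rather than arbitrary operator fields.

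First I would note that, since $U(T^\lambda)$ is unitary on $I$ and continuous on $\bR^2$ with $I$ dense, it is in fact unitary (hence invertible) for every $\lambda\in\bR^2$, and \eqref{eq:ind-red-cont-U-moore} holds for all $\lambda_1,\lambda_2$; taking its $(0,0)$ block gives $U(T^{\lambda_1})_{i,j}\,U(T^{\lambda_2})_{i,j}=U(T^{\lambda_1+\lambda_2})_{i,j}$ for all $i,j\in\bZ_N$. Evaluating this at $\lambda_1=\lambda_2=0$ shows every entry of $U(T^0)$ is idempotent, hence in $\{0,1\}$, so by unitarity $U(T^0)$ is a permutation matrix $U(T^0)_{i,j}=\delta_{j,\sigma(i)}$ for some permutation $\sigma$ of $\bZ_N$. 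Evaluating at $\lambda_2=0$ shows $U(T^\lambda)_{i,j}=0$ whenever $j\neq\sigma(i)$; together with unitarity this forces $U(T^\lambda)_{i,j}=\delta_{j,\sigma(i)}\,u_i(\lambda)$ with $|u_i(\lambda)|=1$, and feeding this back into the multiplicativity identity makes each $u_i$ a continuous homomorphism from $(\bR^2,+)$ to the unit circle, so $u_i(\lambda)=e^{i\langle\lambda,y_i\rangle}$ for some $y_i\in\bR^2$. Thus $U(T^\lambda)$ is a ``twisted permutation'', and it only remains to show that $\sigma$ is a cyclic shift and the $y_i$ form a single rotation orbit.

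For that last step I would use the identity $S\,\hat\psi(T^\lambda)\,S^{-1}=\hat\psi(T^{R_{-1}\lambda})$, valid for every $\psi\in L^2(SE(2,N))$ and read off directly from \eqref{eq:non-comm-ft} together with the relation $R_{-(j+1)}\lambda=R_{-j}R_{-1}\lambda$ (this also shows $I$ is invariant under the rotations $R_k$). Applying it to $\varphi$ and $\eta$ and using the definition of $U$ yields $U(T^{R_{-1}\lambda})=S\,U(T^\lambda)\,S^{-1}$ on $I$, hence on all of $\bR^2$ by continuity. Comparing the $(i,j)$ entries of the two sides using the shape found above, equality of supports gives $\sigma(i+1)=\sigma(i)+1$, so $\sigma$ is the translation $i\mapsto i+k_0$ on $\bZ_N$ with $k_0:=\sigma(0)$, while equality of the phases gives $y_{i+1}=R_1y_i$, so $y_i=R_ix_0$ with $x_0:=y_0$. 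Recalling \eqref{eq:repre}, this says exactly that $U(T^\lambda)_{i,j}=\delta_{j,i+k_0}\,e^{i\langle\lambda,R_ix_0\rangle}=[T^\lambda(x_0,k_0)]_{i,j}$, i.e.\ $U(T^\lambda)=T^\lambda(a)$ with $a=(x_0,k_0)\in SE(2,N)$.

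I expect the main obstacle to be precisely this last step. The multiplicativity-plus-unitarity argument alone only produces a twisted permutation, whose permutation $\sigma$ and phases $y_i$ could a priori be arbitrary, so without further input the sought completeness would fail; it is the shift covariance $S\hat\psi(T^\lambda)S^{-1}=\hat\psi(T^{R_{-1}\lambda})$ that rigidifies $U$ into the representation-valued form $T^\lambda(a)$. The delicate points are checking that this covariance is inherited by the quotient $U=\hat\varphi^{-1}\hat\eta$ and survives the continuous extension off $I$, and handling $\lambda=0\notin\cS$, where $U(T^0)$ must be recovered as a limit of unitaries along the dense set $I$.
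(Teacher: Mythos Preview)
Your argument is correct, but it takes a genuinely different route from the paper's proof. The paper exploits \emph{all} the blocks of the Induction--Reduction identity \eqref{eq:ind-red-cont-U-moore}: using Lemma~\ref{lem:tensor-ind}, the $(k,h)$ block yields
\[
U(T^{\lambda_1})_{\ell,i}\,U(T^{\lambda_2})_{\ell-k,j}=
\begin{cases}
U(T^{\lambda_1+R_k\lambda_2})_{\ell,i} & j=i-k,\\
0 & \text{otherwise.}
\end{cases}
\]
The vanishing for $j\neq i-k$ already forces the cyclic-shift support $U(T^\lambda)=\diag_k\varphi_k(\lambda)\,S^{i_0}$, and the nonvanishing case with general $k$ gives the functional equation $\varphi_\ell(\lambda_1+R_k\lambda_2)=\varphi_\ell(\lambda_1)\varphi_{\ell-k}(\lambda_2)$, from which both the character property ($k=0$) and the orbit relation $x_{\ell-k}=R_{-k}x_\ell$ (general $k$) follow. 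Everything thus comes from the single tensor identity \eqref{eq:tensor}.

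You instead use only the $(0,0)$ block, which by itself only yields a twisted permutation $U(T^\lambda)_{i,j}=\delta_{j,\sigma(i)}e^{i\langle\lambda,y_i\rangle}$ with an \emph{arbitrary} permutation $\sigma$ and unrelated $y_i$. To repair this you import an extra structural fact, the shift covariance $S\hat\psi(T^\lambda)S^{-1}=\hat\psi(T^{R_{-1}\lambda})$ read off from \eqref{eq:non-comm-ft}, and show it descends to $U$. This is a legitimate and pleasant alternative: it cleanly separates the ``abelian'' content (entrywise characters from the $(0,0)$ block) from the ``non-abelian'' content (the cyclic constraint coming from how $\bZ_N$ acts on the dual), and makes explicit where the group structure of $SE(2,N)$, as opposed to merely $\bR^2$, enters. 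The paper's approach is more economical in that it never leaves the identity \eqref{eq:tensor}; yours is more modular and arguably more transparent about the role of the semidirect-product structure.
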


\begin{proof}
	By definition of $U$ it holds that
  \begin{equation}
  \bigoplus_{k\in\bZ_N} U(T^{\lambda_1+R_k\lambda_2}) \circ A = A\circ U(T^{\lambda_1})\otimes U(T^{\lambda_2}) \quad\forall \lambda_1,\lambda_2\neq0.
  \end{equation}
  Then, for any $i,j,\ell,k$,
  \begin{equation}
  \label{eq:final-comp-moore}
  U(T^{\lambda_1})_{\ell,i}U(T^{\lambda_2})_{\ell-k,j} = 
      \begin{cases}
      U(T^{\lambda_1+R_k\lambda_2})_{\ell,i} &\quad \text{if }j=i-k,\\
      0 &\quad \text{otherwise.}
      \end{cases}
  \end{equation}

  By invertibility of $U(T^{\lambda_1})$, there exists $i_{0}\in\bZ_N$ such that $U(T^{\lambda_1})_{0,i_0}\neq 0$.
  Using \eqref{eq:final-comp-moore} this implies that $U(T^{\lambda_2})_{-k,j} =0$ for any $j\neq i_{0}-k$.
  Namely, we have proved that there exists a family of functions $\varphi_{-k}:\cS\to \bC$ such that $U(T^{\lambda_1})_{-k,\cdot} = \varphi_{-k}(\lambda_1) \,\delta_{i_{0}-k}$ or, equivalently, that
  \begin{equation}
  	U(T^\lambda) = \text{diag}_k \varphi_k(\lambda)\,S^{i_0}.
  \end{equation} 
  By the explicit expression \eqref{eq:repre} of $T^\lambda$, in order to complete the proof it suffices to prove that $\varphi_k(\lambda)= e^{i\langle x_0, R_{-k} \lambda\rangle}$ for some $x_0\in\bR^2$.

  By continuity and unitarity of $U$, the $\varphi_k$'s are continuous and satisfy $|\varphi_h(\lambda)|=1$.
  Using again \eqref{eq:final-comp-moore} with $j=i_0-k$, we obtain 
  \begin{equation}
    \label{eq:character-moore}
    \varphi_{\ell}(\lambda_1+R_k\lambda_2) = \varphi_\ell(\lambda_1)\varphi_{\ell-k}(\lambda_2),
  \end{equation}
  for any $\lambda_1,\lambda_2\neq0$  and $\ell,k\in\bZ_N$.

	We claim that the $\varphi_\ell$'s are characters of $\bR^2$.
  Indeed, let us fix $k=0$ in \eqref{eq:character-moore}:
  \begin{equation}
  	\label{eq:char}
  	\varphi_{\ell}(\lambda_1+\lambda_2) = \varphi_\ell(\lambda_1)\varphi_{\ell}(\lambda_2).
  \end{equation}
  Choosing $\lambda_2=-\lambda_1$ in the above shows that $\varphi_\ell$ can be extended at $0$.
  Moreover, letting $\lambda_1=0$ and taking the limit $\lambda_2\rightarrow 0$ shows that this extension is continuous.
  Since characters of $\bR^2$ are exactly the continuous functions satisfying \eqref{eq:char}, the claim is proved.
  
  By Pontryiagin duality, there exists $x_\ell\in\bR^2$ such that $\varphi_\ell(\lambda) = e^{i\langle\lambda,x_\ell\rangle}$.
  Finally, by \eqref{eq:character-moore} with $k\in\bZ_N$ one obtains that $R_{-k}x_\ell=x_{\ell-k}$, which proves that there exists $x_0\in\bR^2$ such that $\varphi_\ell(\lambda)=e^{i\langle x_0, R_{-k} \lambda\rangle}$.
  This completes the proof of the statement.
  \qed
\end{proof}

\section{Proofs} 
\label{sec:proofs}

\begin{proof}[Formula \eqref{eq:ft-lift}]
	Let $\lambda\in\cS$ and consider $v\in \bC^N$.
	Observe that $(x,k)^{-1} = (-R_{-k}x,-k)$.
	Then, by \eqref{eq:ft-def}, \eqref{eq:twist-shift}, and \eqref{eq:repre}, for any $h\in\bZ_N$ we have
	\begin{equation}
		\begin{split}
			&(\widehat{\cL}(T^\lambda).v)_h \\
			& = \sum_{k=0}^{N-1}\int_{\bR^2} \cL(x,k) e^{-i\langle \lambda, R_{h-k} x\rangle} v_{h-k} \, dx \\
			& = \sum_{k=0}^{N-1} v_{h-k}\, \int_{\bR^2} \int_{\bR^2} f(y) \bar\Psi(R_{-k}(y-x)) e^{-i\langle \lambda, R_{h-k} x\rangle}  \,dy\, dx \\
			& = \sum_{k=0}^{N-1} v_{h-k}\, \int_{\bR^2}\int_{\bR^2} \bar\Psi(z) f(y)  e^{-i\langle R_{k-h}\lambda, y-z\rangle}  \,dy\, dz \\
			& = \bar{\hat\Psi}(R_h\lambda) \sum_{k=0}^{N-1} v_{h-k}\, \hat f(R_{k-h}\lambda)\\
			& = \overline{\omega_\Psi(\lambda)}_h \langle \overline{\omega_f(\lambda)} , v \rangle. 
		\end{split}
	\end{equation}
	By definition of $\omega_\Psi(\lambda)^*\otimes \omega_f(\lambda)^*$ this completes the proof.
	\qed
\end{proof}

In order to prove Theorem~\ref{thm:explicit-comp} we need the following explicit description of the equivalence in the Induction-Reduction theorem of Section~\ref{sec:tensor}.

\begin{lemma}
	\label{lem:tensor-ind}
	For any $M, N\in\bC^{N\times N}$ we have 
	\begin{equation}
		\left(A\circ (M\otimes N) \circ A^{-1}\right)^{k,h} = (M_{i,j}\, N_{i-k,j-h})_{i,j\in\bZ_N}.
	\end{equation}
\end{lemma}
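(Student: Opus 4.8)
The plan is to establish the identity by a direct computation from the definitions, namely \eqref{eq:def-A} of the equivalence $A$ and \eqref{eq:ope} of the block decomposition of an operator on $\bigoplus_{k\in\bZ_N}\bC^N$; no harmonic analysis enters here, only bookkeeping with indices in $\bZ_N$. First I would record that $A$ is a bijection, since $(k,h)\mapsto(h,h-k)$ is a permutation of $\bZ_N\times\bZ_N$, and invert \eqref{eq:def-A}: setting $i=h$ and $j=h-k$, so that $k=i-j$, one gets
\begin{equation}
	(A^{-1}w)(i,j)=w_{i-j}(i),\qquad\forall\,w=(w_\ell)_{\ell\in\bZ_N}\in\bigoplus_{\ell\in\bZ_N}\bC^N,\ \forall i,j\in\bZ_N.
\end{equation}
I would also recall that, on components, the tensor product operator acts by $\bigl((M\otimes N)\textbf{v}\bigr)(i,j)=\sum_{p,q\in\bZ_N}M_{i,p}\,N_{j,q}\,\textbf{v}(p,q)$.

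Then I would compose the three maps explicitly. Given $w\in\bigoplus_\ell\bC^N$, put $\textbf{v}=A^{-1}w$, so $\textbf{v}(p,q)=w_{p-q}(p)$; applying $M\otimes N$ gives $\bigl((M\otimes N)\textbf{v}\bigr)(p',q')=\sum_{p,q}M_{p',p}\,N_{q',q}\,w_{p-q}(p)$; and finally applying $A$ and evaluating the $k$-th block at $i$, i.e.\ taking $p'=i$, $q'=i-k$, and performing the bijective change of variable $q=p-r$ in $\bZ_N$, I obtain
\begin{equation}
	\Bigl(A\circ(M\otimes N)\circ A^{-1}\,w\Bigr)_k(i)=\sum_{r\in\bZ_N}\sum_{p\in\bZ_N}M_{i,p}\,N_{i-k,\,p-r}\,w_r(p).
\end{equation}
Comparing with \eqref{eq:ope}, which for $\cB=A\circ(M\otimes N)\circ A^{-1}$ reads $(\cB w)_k(i)=\sum_{h\in\bZ_N}\sum_{j\in\bZ_N}(\cB^{k,h})_{i,j}\,w_h(j)$, I can read off $(\cB^{k,h})_{i,j}=M_{i,j}\,N_{i-k,\,j-h}$, which is exactly the asserted formula.

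The computation is essentially routine; the one point needing care is the consistent reduction modulo $N$ of all indices, in particular verifying that the substitution $q=p-r$ and the identification $k=i-j$ are compatible with the cyclic structure of $\bZ_N$. Alternatively, since both sides of the claimed identity depend bilinearly on $(M,N)$, one could reduce to the case of elementary matrices $M=E_{a,b}$, $N=E_{c,d}$ and check it there; but the direct route above is just as quick, and I anticipate no genuine obstacle.
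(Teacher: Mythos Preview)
Your proof is correct and follows essentially the same route as the paper's own argument: both compute $A^{-1}$ explicitly as $(A^{-1}w)(i,j)=w_{i-j}(i)$, apply $M\otimes N$ componentwise, apply $A$, and then reindex the double sum to read off the blocks via \eqref{eq:ope}. The only cosmetic difference is that the paper writes the action of $M\otimes N$ on $\textbf{v}$ as the matrix product $M\circ\textbf{v}\circ N^T$ rather than as an explicit double sum, but the subsequent index manipulation is identical.
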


\begin{proof}
	Observe that for any $\textbf{v}\in\bC^{N}\otimes\bC^N$ it holds $(M\otimes N).\textbf{v} = M\circ \textbf{v}\circ N^T$.
	Thus,
	\begin{equation}
		[A\circ(M\otimes N).\textbf{v}]_k(i) 
		= \sum_{j,\ell=0}^{N-1} M_{i,j} \, N_{i-k,\ell}\, \textbf{v}(j,\ell).
	\end{equation}
	Since is straightforward to check that $A^{-1}:\bigoplus_{k\in\bZ_N}\bC^N\to \bC^{N\times N}$ is given by $[A^{-1}(w_\ell)_{\ell\in\bZ_N}](k,h) = w_{k-h}(k)$, we then have
	\begin{equation}
		\begin{split}
			[A\circ(M\otimes N) \circ A^{-1}.(w_h & )_{h\in\bZ_N}]_k(i) \\
			&= \sum_{j,\ell=0}^{N-1} M_{i,j}\, N_{i-k,\ell}\, w_{j-\ell}(j)\\
			&= \sum_{j,h=0}^{N-1} M_{i,j}\, N_{i-k,j-h}\, w_{h}(j).			
		\end{split}
	\end{equation}
	By \eqref{eq:ope}, this completes the proof.
	\qed
\end{proof}

\begin{proof}[Proof of Theorem~\ref{thm:explicit-comp}]
	Without loss of generality we can restrict ourselves to consider functions such that $\phi f = f$ and $\cL_c f = \cL f$.
	We start by the trivial remark that the result on the rotational descriptors contains the one on the generalized ones.
	Let us consider
	\begin{gather}
		I_2^{\lambda,h}(f) 
		:= \langle\omega_{f}(R_h\lambda), \omega_{f}(\lambda)\rangle, \\
		I_2^{\lambda_1,\lambda_2,h}(f)
		:= \langle\omega_{f}(R_h\lambda_1)\odot \omega_{f}(\lambda_2), \omega_{f}(\lambda_1+\lambda_2)\rangle.
	\end{gather}
	Since $f$ is assumed to be compactly supported, its Fourier transform is analytic, and so are the functions $\lambda\mapsto I_2^{\lambda,h}(f)$ and $(\lambda_1,\lambda_2)\mapsto I_2^{\lambda_1,\lambda_2,h}(f)$ for any $h\in \bZ_N$.
	Thus, the statement of the proposition reduces to show that $\RPS_{\cL f}= \RPS_{\cL g}$ (resp. $RB_{\cL f} = RB_{\cL g}$) if and only if $I_2^{\lambda,h}(f) = I_2^{\lambda,h}(g)$ (resp. $I_2^{\lambda_1,\lambda_2,h}(f) = I_2^{\lambda_1,\lambda_2,h}(g)$) for a.e.\ $\lambda,\lambda_1,\lambda_2\in\cS$ and all $h\in\bZ_N$
	
	Let us recall the following properties of the tensor product, valid for all $v,v_1,v_2,w,w_1,w_2\in\bC^N$:
	\begin{enumerate}
		\item $(v\otimes w)^* = v\otimes w$,
		\item $(v_1\otimes w_1)\circ (v_2\otimes w_2) = \langle w_1,v_2 \rangle \, v_1\otimes w_2$,
	\end{enumerate}
	By these and \eqref{eq:ft-lift}, we immediately have
	\begin{equation}
		\begin{split}
		\RPS_{\cL_c f}(\lambda,h) 
		&= \langle \omega_{\phi f}(R_h\lambda)^*, \omega_{\phi f}(\lambda)^* \rangle \, \omega_\Psi(R_h\lambda)^*\otimes \omega_\Psi(\lambda)^*\\
		&= I_2^{\lambda,h}(f) \, \omega_\Psi(R_h\lambda)^*\otimes \omega_\Psi(\lambda)^*.
		\end{split}
	\end{equation}
	Hence, whenever $\omega_\Psi(R_h\lambda)^*\otimes \omega_\Psi(\lambda)^* \neq0$, $\RPS_{\cL_cf}(\lambda,h) = \RPS_{\cL_cg}(\lambda,h)$ if and only if  $I_2^{\lambda,h}(f)=I_2^{\lambda,h}(g)$.
	Since $\omega_\Psi(R_h\lambda)^*\otimes \omega_\Psi(\lambda)^* \neq 0$ if and only if $\omega_\Psi(\lambda) \neq 0$, by the fact that $\Psi\in\cR$ this is true for a.e.\ $\lambda\in\cS$.
	This completes the proof of the part of the statement regarding the rotational power-spectrum.

	To prove the statement regarding the rotational bispectrum, let $\cB_f = A\circ \RBS_{\cL f}(\lambda_1,\lambda_2,h)\circ A^{-1}$, where $A$ is the equivalence given by the Induction-Reduction Theorem and defined in \eqref{eq:def-A}.
	Since $A$ is invertible, determining \\$\RBS_{\cL f}(\lambda_1,\lambda_2,h)$ is equivalent to determining $\cB_f$.
	Exploiting the fact that the r.h.s.\ of \eqref{eq:ind-red} is a diagonal matrix we have
	\begin{multline}
			\cB_f^{k,\ell} 
			= \left(A\circ \widehat{\cL f}(T^{R_h\lambda_1})\otimes \widehat{\cL f}(T^{\lambda_1}) \circ A^{-1}\right)^{k,\ell} \\
			\circ \widehat{\cL f}(T^{\lambda_1+R_\ell\lambda_2})^*.
	\end{multline}
	By Lemma~\ref{lem:tensor-ind}, formula \eqref{eq:ft-lift}, and explicit computations, we then get
	\begin{multline}
		\cB^{k,\ell}_f = \left\langle \omega_f(R_h\lambda_1)\odot \omega_f(R_\ell\lambda_2), \omega_f(\lambda_1+R_\ell \lambda_2) \right\rangle \\
		\times \left(\omega_{\Psi}(R_h\lambda_1)^*\odot \omega_{\Psi}(R_{h+k}\lambda_2)^*\right)\otimes \omega_{\Psi}(\lambda_1+R_{\ell}\lambda_2).
	\end{multline}
	Similarly to before, $\left(\omega_{\Psi}(R_h\lambda_1)^*\odot \omega_{\Psi}(R_{h+k}\lambda_2)^*\right)\otimes \omega_{\Psi}(\lambda_1+R_{\ell}\lambda_2) \neq 0$ for a.e.\ $\lambda_1,\lambda_2\in\cS$ since $\Psi\in\cR$.
	For these couples, $\cB_f = \cB_g$ if and only if 
	\begin{multline}
		\left\langle \omega_f(R_h\lambda_1)\odot \omega_f(R_\ell\lambda_2), \omega_f(\lambda_1+R_\ell \lambda_2) \right\rangle\\
		= \left\langle \omega_g(R_h\lambda_1)\odot \omega_g(R_\ell\lambda_2), \omega_g(\lambda_1+R_\ell \lambda_2) \right\rangle.
	\end{multline}
	Finally, making the change of variables $R_\ell \lambda_2\mapsto \lambda_2$ completes the proof of the theorem.
	\qed
\end{proof}

\acknowledgement{
	This research has  been supported by the European Research Council, ERC StG 2009 ``GeCoMethods'', contract n. 239748. The second and last authors were partially supported by the Grant ANR-15-CE40-0018 of the ANR.
}



%
%
\bibliographystyle{spbasic}

\end{document}